\DeclareMathOperator*{\argmin}{arg\,min}
\newtheorem{theorem}{Theorem}
\newtheorem{lemma}{Lemma}
\newtheorem{fact}{Fact}
\newtheorem{corollary}{Corollary}
\providecommand{\algorithmname}{Algorithm}
\algnewcommand{\Parameters}[1]{\item[\textbf{Parameters:}]{#1}}
\algnewcommand{\Init}[1]{\item[\textbf{Initialize:}]{#1}}
\title{Non-stochastic Bandits With Evolving Observations}
\author{
Yogev Bar-On%
\thanks{Tel Aviv University; \texttt{baronyogev@gmail.com}.}
\and
Yishay Mansour
\thanks{Tel Aviv University and Google Research; \texttt{mansour.yishay@gmail.com}.}
}
\date{}
\begin{document}
\maketitle

\begin{abstract}
We introduce a novel online learning framework that unifies and generalizes pre-established models, such as delayed and corrupted feedback, to encompass adversarial environments where action feedback evolves over time. In this setting, the observed loss is arbitrary and may not correlate with the true loss incurred, with each round updating previous observations adversarially. We propose regret minimization algorithms for both the full-information and bandit settings, with regret bounds quantified by the average feedback accuracy relative to the true loss. Our algorithms match the known regret bounds across many special cases, while also introducing previously unknown bounds.
\end{abstract}


\section{Introduction}\label{sec:intro}
In many sequential decision problems, the outcomes of actions are not immediately observed, and rather could only be estimated. Those estimations may be changing constantly (usually becoming more accurate) until certainty about the outcome is achieved.

This situation is common in financial settings, such as options trading, where the underlying asset price at the time of exercising can only be estimated at the time of the trade. Another more modern example is trading on blockchain systems (see, e.g., \citep{bar2023uniswap}), where new blocks are created immediately but may be deleted with a low probability after a certain time (due to "forking" \citep{neudecker2019short}).

Another application is online advertising, where the value of an ad may evolve over time. For example, a user's click indicates some positive value. If the user completes an online form at a later time, the value increases. The value can then increase again if the user completes a purchase, or decrease if the user leaves without further action.

While working under these conditions, the theory of online learning with delayed feedback \citep{cesa2016delay,thune2019nonstochastic} is useful for making informed decisions. However, there is still a big gap between theory and practice - primarily since the estimations of action values are not considered before the true value becomes known.

\paragraph{Evolving feedback}
To bridge this gap, we propose a new framework for online decision-making in which the feedback on actions taken by the agent evolves and changes over time. Notably, the feedback can change retroactively, overriding previous observations. This applies to and generalizes various established feedback mechanisms. Those include delayed feedback, where all the information about the loss (the action's result) is revealed at a future time; composite feedback, where the loss is revealed monotonically over time; and corrupted feedback, where the true loss is never revealed.

We investigate online learning environments with $K$ actions over $T$ rounds. An oblivious adversary chooses ahead of time not only the true loss $\ell_t \in [0,1]^K$ for round $t$, but also the \emph{feedback loss} $\ell_\tau^{(t)}\in [0,1]^K$, which represents the new observation regarding the actions taken previously at any step $\tau\leq t$, overriding any previous observations.

As usual, at each round $t$ the agent chooses an action $a_t\in[K]$. The observations do not affect the suffered loss, and our objective is still to minimize the expected regret $R(T)$ in comparison to the best true loss in hindsight:
\[
R(T) \triangleq \max_{a\in[K]}\mathbb{E}\left[{\sum_{t=1}^T{\ell_{t,a_t} - \ell_{t,a}}}\right].
\]

In our model, the observed losses may or may not correlate with the true loss. In the latter case, the agent gains no information and thus no strategy can guarantee a low regret. Fortunately, in real-life situations we expect the observations to be good estimates of the true loss and be more accurate as time progresses. Hence our algorithms' regret bound is smaller the more accurate the feedback is, and depends on the accuracy term 
\[
\Lambda = \sum_{t=1}^{T}{\sum_{\tau=1}^{t}{\min\left\{1, \left\|\ell_\tau - \ell^{(t)}_\tau \right\|_2\right\}}}.
\]

\subsection{Contributions and outline}\label{subsec:contributions}
\paragraph{Full-information setting} We start with presenting an Exponential Weights \citep{cesa1997use,cesa2006prediction} variant for the full-information setting in Section \ref{sec:ewa}, where all the feedback generated by the adversary is revealed to the agent, and show a regret bound better than $\widetilde{O}\left(\sqrt{\Lambda}\right)$, where $\widetilde{O}$ hides logarithmic terms.

\paragraph{Bandit setting} We then present a Follow-The-Regularized-Leader (FTRL) \citep{abernethy2008competing,orabona2019modern} variant for the bandit setting in Section \ref{sec:ftrl}, where only the feedback generated for actions taken by the agent is revealed, and show a regret bound of $\widetilde{O}\left(\sqrt{KT+\Lambda}\right)$. Our novel analysis creatively adapts methods from the delayed setting, where we quantify the information revealed at each step using the feedback accuracy, instead of a binary revealed/not revealed. 

We can consider the delayed setting as a special case, where if the delay of round $\tau$ is $d_\tau$, then $\ell^{(t)}_\tau = \ell_\tau$ for any $t\geq \tau+d_\tau$, and $\ell^{(t)}_\tau = 0$ otherwise. Hence, $\Lambda=\sum_{t=1}^{T} d_t$ captures the total delay. Thus, our regret bounds are optimal (up to logarithmic terms) when used in the delayed setting \citep{cesa2016delay}.

\paragraph{Applications} We show more special cases in Section \ref{sec:bounds}. A key benefit to our framework is that it naturally supports any combination of those applications, for example, delayed feedback that is sometimes corrupted.
\begin{itemize}
\item In the optimistic delayed feedback environment \citep{flaspohler2021online,hsieh2022multi}, where hints on delayed feedback are available to the agent, we match the existing regret bound in the full-information setting and show the \emph{first regret bound} in the bandit setting.
\item In the corrupted feedback environment \citep{resler2019adversarial,hajiesmaili2020adversarial}, where the true losses are never revealed, we get a standard $\widetilde{O}\left(\sqrt{KT} + \mathcal{C}\right)$ bound, where $\mathcal{C}$ is the corruption budget.
\item In the composite delayed feedback environment \citep{cesa2018nonstochastic,wang2021adaptive}, where the feedback is spread over $d$ partial consecutive observations, we apply our framework to get the optimal $\widetilde{O}\left(\sqrt{(K+d)T}\right)$ regret bound.
\end{itemize}

For clarity, we defer detailed proofs to the appendix.

\subsection{Additional related works}
Online learning under adversarial delayed feedback has been studied extensively both under the full-information \citep{weinberger2002delayed,joulani2013online,joulani2016delay} and bandit \citep{bistritz2019online,zimmert2020optimal,ito2020delay,gyorgy2021adapting,jin2022near,van2022nonstochastic,li2023modified} settings. Our analysis in this work is inspired by a recent work \citep{van2023unified} that unifies the analysis of delayed feedback under many regimes such as linear bandits and Markov decision processes.

Many works study stochastic delayed environments as well \citep{agarwal2011distributed,vernade2020linear,pike2018bandits,gael2020stochastic,lancewicki2021stochastic,tang2024stochastic}, and there are optimal algorithms for both cases simultaneously ("best of both worlds") \citep{masoudian2022best,masoudian2023improved}.

Also generalized by our work is a corrupted adversarial feedback environment, previously studied for the stochastic case as well \citep{lykouris2018stochastic,amir2020prediction,ito2021optimal,he2022nearly}.


\section{Evolving Exponential Weights}\label{sec:ewa}
We start by presenting a simple regret minimization algorithm for the full information setting, where after step $t$ the agent observes all the feedback losses $\ell_\tau^{(t)}$ for all $\tau\leq t$. Our proposed algorithm is a modified version of Exponential Weights \citep{cesa1997use, cesa2006prediction}, summarized in Algorithm \ref{alg:ewa}. The idea is for the agent to continuously update their beliefs on the true loss, even retroactively, as more feedback is presented.

Specifically, at each step $t$, the agent maintains a probability distribution $p$ over the set of possible actions as a function of an estimated total loss $L\in \mathbb{R}_{+}^K$:
\begin{align}\label{eq:prob-def}
p_i(L) = \frac{e^{-\eta L_i}}{\sum_{j\in[K]}e^{-\eta L_j}}.
\end{align}
where $\eta$ is some learning rate chosen by the agent. In our case, the agent's estimation of the total loss is based on the most recently observed feedback losses:
\[ L^\mathrm{e}_t = \sum_{\tau=1}^{t-1}{\ell_\tau^{(t-1)}}. \]

\begin{algorithm}
\caption{\label{alg:ewa} Evolving Exponential Weights}
\begin{algorithmic}[1]
\Parameters{$K, T\in \mathbb{N}$; $\eta>0$.}
\Init{$L^\mathrm{e}_{1,i} \gets 0$ for all $i \in [K]$.}
\For{$t = 1$ \textbf{to} $T$}
    \State Compute $p_i\left(L^\mathrm{e}_{t}\right) = \frac{e^{-\eta L^\mathrm{e}_{t,i}}}{\sum_{j\in[K]}e^{-\eta L^\mathrm{e}_{t,j}}}$ for all $i\in [K]$.
    \State Play a random action $a_t \sim p\left(L^\mathrm{e}_{t}\right)$ and observe $\ell_\tau^{(t)}$ for all $\tau\leq t$.
    \State Compute $L^\mathrm{e}_{t+1} = \sum_{\tau=1}^{t}{\ell_\tau^{(t)}}$.
\EndFor
\end{algorithmic}
\end{algorithm}

To quantify the regret, we will use the \emph{total feedback inaccuracy} $D$ of the adversary:
\[
D \triangleq \sum_{t=1}^T{\left\|L^\mathrm{e}_t - L_t\right\|_\infty},
\]
denoting the true total loss up to step $t$ by $L_t = \sum_{\tau=1}^{t-1}{\ell_\tau}$. Note this term does not depend on the agent's actions, but only on the losses generated by the adversary. It captures the magnitude of the difference between the observed and true losses.

In the case of a delayed setting with delay $d_t$ we have that $\left\|L^\mathrm{e}_t - L_t\right\|_\infty \leq d_t$, and thus $D~\leq~\sum_{t=1}^T d_t$, generalizing the total delay term.

\subsection{Analysis}
To analyze the regret of Algorithm \ref{alg:ewa}, we will start by separating the regret into an \emph{observation drift term} and an \emph{auxiliary regret} term, as usually done when analyzing delayed settings. We can present the expected regret, assuming $a^*$ is the optimal action, as:
\begin{align}\label{eq:full-separation}
R(T) &= \mathbb{E}\left[\sum_{t=1}^T{(\ell_{t,a_t} - \ell_{t,{a^*}})}\right] \notag \\
&= \sum_{t=1}^T{\left(p(L^\mathrm{e}_{t})\cdot\ell_t - \ell_{t,{a^*}}\right)} \notag \\
&= \underbrace{\sum_{t=1}^T{\left(p(L^\mathrm{e}_{t}) - p(L_{t})\right)\cdot\ell_t}}_{\text{observation drift}} + \underbrace{\sum_{t=1}^T{\left(p(L_{t})\cdot\ell_t - \ell_{t,{a^*}}\right)}}_{\text{auxiliary regret}}.
\end{align}

A standard Exponential Weights analysis bounds the auxiliary regret:
\begin{restatable}{lemma}{lemstandardexp}\label{lem:standard-exp3}
Computing $p$ as in Eq. (\ref{eq:prob-def}), we have for any action $a\in[K]$:
\[ \sum_{t=1}^T{\left(p(L_{t})\cdot\ell_t - \ell_{t,a}\right)} \leq \frac{\ln{K}}{\eta} + \frac{\eta}{2}T. \]
\end{restatable}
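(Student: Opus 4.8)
The plan is to prove this via the classical exponential-weights (Hedge) potential argument, observing that $p(L_t)$ is exactly the exponential-weights distribution induced by the \emph{true} cumulative losses $L_t=\sum_{\tau=1}^{t-1}\ell_\tau$; the feedback losses $\ell_\tau^{(t)}$ play no role in this particular lemma. Concretely, I would track the unnormalized weight sum $W_t=\sum_{i\in[K]}e^{-\eta L_{t,i}}$ and bound the left-hand side by telescoping $\ln W_t$, exploiting the fact that $p_i(L_t)=e^{-\eta L_{t,i}}/W_t$.

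First I would compute the one-step multiplicative update. Since $L_{t+1}=L_t+\ell_t$, the ratio factorizes as
\[ \frac{W_{t+1}}{W_t}=\sum_{i\in[K]}\frac{e^{-\eta L_{t,i}}}{W_t}\,e^{-\eta\ell_{t,i}}=\sum_{i\in[K]}p_i(L_t)\,e^{-\eta\ell_{t,i}}. \]
Next, using $\ell_{t,i}\in[0,1]$ together with the elementary inequality $e^{-x}\le 1-x+\tfrac{x^2}{2}$ (valid for all $x\ge 0$) at $x=\eta\ell_{t,i}$, and the bound $\ell_{t,i}^2\le\ell_{t,i}\le 1$, I would obtain
\[ \frac{W_{t+1}}{W_t}\le 1-\eta\,p(L_t)\cdot\ell_t+\frac{\eta^2}{2}. \]
Taking logarithms and applying $\ln(1+z)\le z$ then yields the per-round bound $\ln\frac{W_{t+1}}{W_t}\le -\eta\,p(L_t)\cdot\ell_t+\frac{\eta^2}{2}$.

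Summing over $t=1,\dots,T$ telescopes the left side to $\ln W_{T+1}-\ln W_1$. I would close the argument with the two endpoint estimates: $W_1=K$ (since $L_1=0$), and, for any fixed $a\in[K]$, $W_{T+1}\ge e^{-\eta\sum_{t=1}^{T}\ell_{t,a}}$ by retaining only the $a$-th summand, so that $\ln W_{T+1}\ge -\eta\sum_{t=1}^{T}\ell_{t,a}$. Rearranging the resulting inequality and dividing by $\eta>0$ gives the claimed $\frac{\ln K}{\eta}+\frac{\eta}{2}T$ bound. There is no genuine obstacle here, as this is the standard Hedge guarantee; the only points requiring mild care are the choice of the second-order inequality $e^{-x}\le 1-x+\tfrac{x^2}{2}$ (rather than Hoeffding's lemma) in order to land exactly on the constant $\tfrac12$, and the observation that it is $p(L_t)$, evaluated at the \emph{true} cumulative loss, that makes the weight ratio telescope cleanly.
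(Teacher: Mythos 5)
Your proposal is correct and follows essentially the same argument as the paper's proof: the standard Hedge potential analysis tracking $W_t=\sum_i e^{-\eta L_{t,i}}$, the one-step ratio bound via $e^{-x}\le 1-x+\tfrac{x^2}{2}$, the log/telescoping step with $\ln(1+z)\le z$, and the endpoint estimates $W_1=K$ and $W_{T+1}\ge e^{-\eta\sum_t \ell_{t,a}}$. The only difference is cosmetic — you take logarithms round by round and sum, whereas the paper bounds the telescoping product first and takes a single logarithm at the end.
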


To bound the drift term, we will use the following lemma:
\begin{restatable}{lemma}{lemdriftbound}\label{lem:drift-bound}
Let $\ell\in[0,1]^K$ be some loss vector, and $L_1,L_2$ two different estimations of the total loss. If the probability $p$ is computed as described in Eq. (\ref{eq:prob-def}), we get:
\[ \left(p(L_1)-p(L_2)\right)\cdot\ell \leq 2\eta\left\|L_1-L_2\right\|_\infty. \]
\end{restatable}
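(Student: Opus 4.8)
Let me think about this. We want to bound $(p(L_1)-p(L_2))\cdot\ell$ where $p$ is the softmax (exponential weights) map, and show it's at most $2\eta\|L_1-L_2\|_\infty$.

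The natural approach: this is a statement about how much the output probability vector of a softmax changes when the input changes. Since $\ell \in [0,1]^K$, and $p(L_1), p(L_2)$ are probability distributions, we have $(p(L_1)-p(L_2))\cdot \ell \leq \|p(L_1)-p(L_2)\|_1 \cdot \|\ell\|_\infty \leq \|p(L_1)-p(L_2)\|_1$. Actually more carefully, $(p(L_1)-p(L_2))\cdot\ell = \sum_i (p_i(L_1)-p_i(L_2))\ell_i$. Since $\ell_i \in [0,1]$, this is bounded by $\sum_{i: p_i(L_1) > p_i(L_2)} (p_i(L_1)-p_i(L_2)) \cdot 1 = \frac{1}{2}\|p(L_1)-p(L_2)\|_1$ (using that the positive and negative parts of $p(L_1)-p(L_2)$ each sum to the total variation distance).

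So it suffices to bound $\frac{1}{2}\|p(L_1)-p(L_2)\|_1$, i.e., the total variation distance, by $2\eta\|L_1-L_2\|_\infty$. Wait — that gives a factor of $\frac{1}{2}$, so I'd need TV $\leq 2\eta\|L_1-L_2\|_\infty$, meaning $\|p(L_1)-p(L_2)\|_1 \leq 4\eta\|L_1-L_2\|_\infty$. Hmm, let me reconsider whether the constant works out.

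**Approach via a smooth path / mean value bound.**

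Define the interpolation $L(s) = (1-s)L_2 + s L_1$ for $s\in[0,1]$, and consider $g(s) = p(L(s))\cdot \ell$. Then $(p(L_1)-p(L_2))\cdot\ell = g(1)-g(0) = \int_0^1 g'(s)\,ds$, and I'd bound $|g'(s)|$ uniformly. Computing $g'(s) = \sum_i \frac{\partial p_i}{\partial L}\big|_{L(s)} \cdot (L_1-L_2) \cdot \ell_i$. The Jacobian of softmax is $\frac{\partial p_i}{\partial L_j} = -\eta\, p_i(\delta_{ij}-p_j)$. So $g'(s) = -\eta \sum_i \ell_i p_i \left[(L_{1,i}-L_{2,i}) - \sum_j p_j (L_{1,j}-L_{2,j})\right] = -\eta\,\mathrm{Cov}_{p}(\ell, \Delta)$ where $\Delta = L_1-L_2$ and the covariance is under distribution $p$. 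Then $|g'(s)| = \eta|\mathrm{Cov}_p(\ell,\Delta)| \leq \eta \sqrt{\mathrm{Var}_p(\ell)\,\mathrm{Var}_p(\Delta)}$. Since $\ell \in [0,1]$, $\mathrm{Var}_p(\ell)\leq 1/4$, and $\mathrm{Var}_p(\Delta) \leq \|\Delta\|_\infty^2$, giving $|g'(s)| \leq \eta \cdot \frac{1}{2}\|\Delta\|_\infty$. Integrating yields $(p(L_1)-p(L_2))\cdot\ell \leq \frac{\eta}{2}\|L_1-L_2\|_\infty$ — even stronger than claimed.

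**Plan.** First I would reduce the problem to bounding the derivative along the linear interpolation between $L_2$ and $L_1$, writing the difference as $\int_0^1 g'(s)\,ds$. Second I would compute the softmax Jacobian explicitly and recognize $g'(s)$ as $-\eta$ times a covariance under the distribution $p(L(s))$. Third I would bound the covariance by Cauchy--Schwarz (or, more crudely, by noting $|\mathrm{Cov}_p(\ell,\Delta)| \leq \|\ell\|_\infty \|\Delta\|_\infty \leq \|\Delta\|_\infty$ directly, which already gives the stated $2\eta$ bound with room to spare). The main obstacle is keeping the variance/covariance bookkeeping clean and tracking constants; the factor of $2$ in the statement is generous relative to the true $\eta/2$, so any of several crude bounds (TV-distance argument or covariance argument) will comfortably close it. I would present the covariance computation since it is the cleanest and makes the role of $\|\ell\|_\infty \leq 1$ and $\|\Delta\|_\infty$ transparent.
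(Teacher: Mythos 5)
Your proof is correct, and it takes a genuinely different route from the paper's. The paper argues multiplicatively and without any calculus: it writes $\left(p(L_1)-p(L_2)\right)\cdot\ell = \sum_{i} p_i(L_1)\left(1-\frac{p_i(L_2)}{p_i(L_1)}\right)\ell_i$, lower-bounds each ratio by $\frac{p_i(L_2)}{p_i(L_1)} \geq e^{\eta\left(\min_j (L_{1,j}-L_{2,j}) - \max_j (L_{1,j}-L_{2,j})\right)} \geq e^{-2\eta\|L_1-L_2\|_\infty} \geq 1-2\eta\|L_1-L_2\|_\infty$, and sums; the constant $2$ comes from the worst-case spread between the coordinates of $L_1-L_2$. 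You instead differentiate along the segment $L(s)=(1-s)L_2+sL_1$, identify the derivative as $g'(s)=-\eta\,\mathrm{Cov}_{p(L(s))}(\ell,\Delta)$ with $\Delta=L_1-L_2$ via the softmax Jacobian $\partial p_i/\partial L_j=-\eta\,p_i(\delta_{ij}-p_j)$ (which is correct), and bound the covariance. All your steps check out: Cauchy--Schwarz with $\mathrm{Var}_p(\ell)\leq \frac{1}{4}$ (Popoviciu, since $\ell\in[0,1]^K$) and $\mathrm{Var}_p(\Delta)\leq\|\Delta\|_\infty^2$ gives $|g'(s)|\leq\frac{\eta}{2}\|\Delta\|_\infty$, and integrating yields $\frac{\eta}{2}\|L_1-L_2\|_\infty$, a factor-of-$4$ improvement over the stated bound; your cruder fallback $|\mathrm{Cov}_p(\ell,\Delta)|\leq\|\ell\|_\infty\|\Delta\|_\infty$ is also valid for nonnegative $\ell$ (write $\mathrm{Cov}_p(\ell,\Delta)=\mathbb{E}_p\left[(\ell-\mathbb{E}_p[\ell])\Delta\right]$ and note $|\ell_i-\mathbb{E}_p[\ell]|\leq\|\ell\|_\infty$), and smoothness of softmax on all of $\mathbb{R}^K$ makes the integration step unproblematic. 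The trade-off: the paper's argument is elementary and self-contained, needing only $e^{-x}\geq 1-x$; yours is sharper and conceptually cleaner, exhibiting the drift as a covariance under the current play distribution --- the same ``local variance'' quantity that drives second-order analyses of exponential weights --- and it generalizes to other smooth mirror maps. Since the lemma only contributes a constant to Theorem \ref{thm:full-info-bound}, either suffices for the paper's purposes.
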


By substituting the results of Lemmas \ref{lem:standard-exp3} and \ref{lem:drift-bound} in Eq. (\ref{eq:full-separation}), we can now derive the main result for this section:
\begin{theorem}\label{thm:full-info-bound}
    The expected regret of Algorithm \ref{alg:ewa} after $T$ rounds holds:
    \[ R(T) \leq \frac{\ln{K}}{\eta} + \eta \sum_{t=1}^T{\left(\frac{1}{2} + 2\left\|L^\mathrm{e}_t-L_t\right\|_\infty\right)} = \frac{\ln{K}}{\eta} + \eta \left(\frac{T}{2} + 2D\right).  \]
\end{theorem}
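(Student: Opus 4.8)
The plan is to derive the theorem as a direct consequence of the two preceding lemmas by substituting their bounds into the regret decomposition already established in Eq.~(\ref{eq:full-separation}). The only genuinely new observation needed is to confirm that the decomposition's expectation collapses cleanly, so I would begin there before doing anything else.

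First, I would justify the passage from $R(T)$ to the decomposition. Since the adversary is oblivious, the maximizing action $a^*$ is fixed in advance, so $R(T) = \mathbb{E}\left[\sum_t (\ell_{t,a_t} - \ell_{t,a^*})\right]$. Crucially, in the full-information setting the estimate $L^\mathrm{e}_t = \sum_{\tau < t}\ell_\tau^{(t-1)}$ depends only on the adversary's feedback, not on the agent's past actions, and is therefore deterministic. Hence $\mathbb{E}[\ell_{t,a_t}] = \sum_i p_i(L^\mathrm{e}_t)\ell_{t,i} = p(L^\mathrm{e}_t)\cdot\ell_t$ because $a_t\sim p(L^\mathrm{e}_t)$, which yields exactly the split into the observation drift and the auxiliary regret of Eq.~(\ref{eq:full-separation}).

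Next, I would bound each piece. For the auxiliary regret, I apply Lemma~\ref{lem:standard-exp3} with the choice $a = a^*$, giving $\frac{\ln K}{\eta} + \frac{\eta}{2}T$. For the observation drift, I apply Lemma~\ref{lem:drift-bound} term by term with $L_1 = L^\mathrm{e}_t$, $L_2 = L_t$, and $\ell = \ell_t$, obtaining $\left(p(L^\mathrm{e}_t) - p(L_t)\right)\cdot\ell_t \leq 2\eta\left\|L^\mathrm{e}_t - L_t\right\|_\infty$. Summing over $t$ and recognizing the definition of $D$ turns this into $2\eta\sum_t \left\|L^\mathrm{e}_t - L_t\right\|_\infty = 2\eta D$. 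Adding the two bounds gives
\[ R(T) \leq \frac{\ln K}{\eta} + \frac{\eta}{2}T + 2\eta D = \frac{\ln K}{\eta} + \eta\left(\frac{T}{2} + 2D\right), \]
as claimed.

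There is essentially no hard step here: the content lives in the two lemmas, and the theorem is their sum. If anything, the one point demanding care is the expectation argument in the decomposition; it is tempting to treat $L^\mathrm{e}_t$ as a random quantity, but the full-information assumption makes it deterministic, which is precisely what licenses replacing $\mathbb{E}[\ell_{t,a_t}]$ by $p(L^\mathrm{e}_t)\cdot\ell_t$ without an additional tower-of-expectations step. This is also exactly the place where the bandit analysis in Section~\ref{sec:ftrl} must work harder, since there the observations, and hence the estimates, depend on the realized actions.
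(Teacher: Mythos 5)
Your proof is correct and follows exactly the paper's own route: the theorem is obtained by substituting Lemma~\ref{lem:standard-exp3} (with $a=a^*$) and Lemma~\ref{lem:drift-bound} (applied per round with $L_1=L^\mathrm{e}_t$, $L_2=L_t$, $\ell=\ell_t$) into the decomposition of Eq.~(\ref{eq:full-separation}). Your additional remark that the obliviousness of the adversary and the action-independence of $L^\mathrm{e}_t$ in the full-information setting are what license the decomposition is a correct clarification of a step the paper leaves implicit.
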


Optimally, we would like to set $\eta = \sqrt{\frac{\ln{K}}{\frac{T}{2} + 2D}}$, but we do not necessarily know the value of $D$ ahead of time. We can either use a standard doubling trick (See \citep{bistritz2019online, lancewicki2022learning}) to get a parameter-independent algorithm, or use a known upper bound:
\begin{corollary}\label{cor:ewa-regret}
    Let $\bar{D}$ be a known upper bound on the feedback inaccuracy, such that:
    \[ \bar{D} \geq D = \sum_{t=1}^T{\left\|L^\mathrm{e}_t-L_t\right\|_\infty}. \]
    Using $\eta = \sqrt{\frac{\ln{K}}{\frac{T}{2} + 2\bar{D}}}$, the expected regret of Algorithm \ref{alg:ewa} after $T$ rounds holds:
    \[
        R(T) \leq \sqrt{4\ln{K}\left(\frac{T}{2} + 2\bar{D}\right)}.
    \]
\end{corollary}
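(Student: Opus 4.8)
The plan is to derive the corollary directly from Theorem \ref{thm:full-info-bound} by an elementary learning-rate optimization, so the argument is almost purely algebraic. First I would invoke the theorem to obtain $R(T) \le \frac{\ln K}{\eta} + \eta\left(\frac{T}{2} + 2D\right)$. Since the coefficient $2\eta$ multiplying $D$ is strictly positive, this upper bound is monotonically increasing in $D$, so I may freely replace $D$ by any upper bound; using the hypothesis $\bar{D} \ge D$ gives $R(T) \le \frac{\ln K}{\eta} + \eta\left(\frac{T}{2} + 2\bar{D}\right)$. The point of introducing $\bar{D}$ is precisely that it is known to the agent in advance, so the learning rate can be tuned against it even though the realized inaccuracy $D$ is not known ahead of time.

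Next I would recognize the right-hand side as a function of the form $f(\eta) = \frac{a}{\eta} + b\,\eta$ with $a = \ln K$ and $b = \frac{T}{2} + 2\bar{D}$. Such a function is minimized at $\eta^\star = \sqrt{a/b}$, where the two summands become equal and each takes the value $\sqrt{ab}$. The prescribed choice $\eta = \sqrt{\frac{\ln K}{\frac{T}{2} + 2\bar{D}}}$ is exactly this minimizer, so I would substitute it and check that both $\frac{\ln K}{\eta}$ and $\eta\left(\frac{T}{2} + 2\bar{D}\right)$ reduce to the common value $\sqrt{\ln K \left(\frac{T}{2} + 2\bar{D}\right)}$. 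Summing the two equal terms then yields $2\sqrt{\ln K \left(\frac{T}{2} + 2\bar{D}\right)} = \sqrt{4 \ln K \left(\frac{T}{2} + 2\bar{D}\right)}$, which is the claimed bound.

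There is no substantive obstacle here: the only points worth verifying carefully are that the bound is genuinely monotone in $D$ (so the substitution $D \mapsto \bar{D}$ is legitimate rather than a sign error) and that the two halves of $f(\eta^\star)$ truly balance after substitution. Both are immediate. I would remark in passing that this corollary is the natural companion to the doubling-trick alternative mentioned above: when a valid upper bound $\bar{D}$ is available, tuning $\eta$ against it attains the $\sqrt{\ln K\,(\tfrac{T}{2}+2\bar{D})}$ rate directly, avoiding the extra logarithmic overhead a doubling schedule would incur.
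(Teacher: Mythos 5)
Your proposal is correct and follows the same route the paper intends: apply Theorem \ref{thm:full-info-bound}, use monotonicity in $D$ to substitute $\bar{D}$, and plug in the balancing learning rate $\eta = \sqrt{\ln K /(\tfrac{T}{2}+2\bar{D})}$ so the two terms each equal $\sqrt{\ln K\left(\tfrac{T}{2}+2\bar{D}\right)}$. The paper treats this as immediate (it gives no separate proof in the appendix), and your algebra checks out exactly.
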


In the delayed setting where $D$ is the total delay, this bound is known to be optimal \citep{cesa2016delay}. Note that in the worst case, $D=\Omega\left(T^2\right)$. This implies that if the adversary provides no feedback until time $T$ on any of the losses, the regret is potentially $\Theta(T)$, as should be expected in such a scenario.

\section{Evolving FTRL}\label{sec:ftrl}
In the bandit setting, the agent only receives feedback losses for actions they played at the times they were played. So if at time $\tau$ that agent chose action $a_\tau$, it will now receive $\ell^{(t)}_{\tau,a_\tau}$ for all $t\geq \tau$, and will \emph{not} receive $\ell^{(t)}_{\tau,a}$ for any other action $a\neq a_\tau$.

We will use a Follow-The-Regularized-Leader (FTRL) \citep{abernethy2008competing, orabona2019modern} variant as a strategy, presented in Algorithm \ref{alg:ftrl}. Given an estimated total loss $L\in\mathbb{R}_{+}^K$, we compute the probabilities over the set of actions as follows:
\begin{align}\label{eq:ftrl-prob-def}
p(L) = \argmin_{p\in\Delta_K}{\left(p\cdot L + \Phi\left(p\right)\right)},
\end{align}
where $\Delta_K$ is the $K$-dimensional simplex and $\Phi$ is the \emph{regularization function}. Specifically in our analysis, we will use a standard negative entropy with log barrier regularization:
\begin{align}\label{eq:def-reg}
\Phi_{\eta,\gamma}\left(p\right) \triangleq \sum_{i\in[K]}{\left(\frac{p_i}{\eta} - \frac{1}{\gamma}\right)\ln{p_i}}
\end{align}
for some parameters $\eta,\gamma>0$.

\paragraph{Loss estimates}
Similar to what we did in the full-information case, as an estimation for the total loss at step $t$ we will use the most recent update:
\[ \widehat{L}^\mathrm{e}_t = \sum_{\tau=1}^{t-1} \hat{\ell}^{(t-1)}_\tau, \]
where $\hat{\ell}^{(t-1)}_\tau$ is an unbiased loss estimate calculated as:
\[ \hat{\ell}^{(t-1)}_{\tau,a} = \ell^{(t-1)}_{\tau,a} \frac{\mathds{1}\left[a=a_\tau\right]}{p_{a_\tau}\left(\widehat{L}^\mathrm{e}_\tau\right)} \]
for all $\tau< t$ and $a\in[K]$. Although these definitions might appear circular at first, we highlight that to compute $\widehat{L}^\mathrm{e}_t$ there is a need only for prior values of $\widehat{L}^\mathrm{e}_\tau$ where $\tau<t$.

Useful for our analysis is also $\hat{\ell}_\tau$, the unbiased loss estimate when the feedback is accurate:
\[ \hat{\ell}_{\tau,a} = \ell_{\tau,a} \frac{\mathds{1}\left[a=a_\tau\right]}{p_{a_\tau}\left(\widehat{L}^\mathrm{e}_\tau\right)}. \]

\paragraph{Feedback accuracy measure}
In contrast to the full-information setting, we will quantify the feedback accuracy differently, using what we call the \emph{feedback inaccuracy coefficients}:
\[
\lambda_\tau^{(t)} \triangleq \frac{\left\|\ell_\tau^{(t)} - \ell_\tau \right\|_2}{1 + \left\|\ell_\tau^{(t)} - \ell_\tau \right\|_2} = \Theta\left(\min\left\{1,\left\|\ell_\tau^{(t)} - \ell_\tau \right\|_2\right\}\right),
\]
measuring the feedback inaccuracy at step $t$ about the losses of step $\tau$. We again emphasize that the value of $\lambda_\tau^{(t)}$ does not depend on the agent's actions.

We will also denote by $\lambda_t~=~\sum_{\tau=1}^{t-1}{\lambda_\tau^{(t-1)}}$ the total feedback inaccuracy measure at step $t$, and by $d_\mathrm{max}~=~\max\left\{d \mid \exists_t \left(\ell_{t-d}^{(t)} \neq \ell_{t-d}\right) \right\}$ the maximal amount of rounds that the feedback can evolve.

Again taking an example from the delayed setting, we can see that $\lambda_t \leq d_t$ and thus the accuracy measures generalize the delays.

\begin{algorithm}
\caption{\label{alg:ftrl} Evolving FTRL}
\begin{algorithmic}[1]
\Parameters{Function $\Phi$; $K, T\in \mathbb{N}$.}
\Init{$\widehat{L}^\mathrm{e}_{1,i} \gets 0$ for all $i \in [K]$.}
\For{$t = 1$ \textbf{to} $T$}
    \State Compute $p\left(\widehat{L}^\mathrm{e}_{t}\right) = \argmin_{p\in\Delta_K}{\left(p\cdot \widehat{L}^\mathrm{e}_{t} + \Phi\left(p\right)\right)}$.
    \State Play a random action $a_t \sim p\left(\widehat{L}^\mathrm{e}_{t}\right)$ and observe $\ell_{\tau,a}^{(t)}$ for all $\tau\leq t$ such that $a=a_\tau$.
    \State Compute $\hat{\ell}^{(t)}_{\tau,a} = \ell^{(t)}_{\tau,a} \frac{\mathds{1}\left[a=a_\tau\right]}{p_{a_\tau}\left(\widehat{L}^\mathrm{e}_\tau\right)}$ for all $\tau\leq t$ and $a\in[K]$.
    \State Compute $\widehat{L}^\mathrm{e}_{t+1} = \sum_{\tau=1}^t{\hat{\ell}_\tau^{(t)}}$.
\EndFor
\end{algorithmic}
\end{algorithm}

\subsection{Analysis}
For the analysis of Algorithm \ref{alg:ftrl}, we will follow a method similar to \citep{van2023unified}. We will separate the regret into different drift terms, bounding each independently. Our novelty comes from new intermediate loss estimates, parameterized by the feedback accuracy coefficients $\lambda_\tau^{(t)}$. Namely, we will use the following intermediate loss estimates:
\begin{align*}
\widetilde{L}^\mathrm{e}_t &= \sum_{\tau=1}^{t-1}{\left(\left(1-\lambda_\tau^{(t-1)}\right)\hat{\ell}_\tau^{(t-1)} + \lambda_\tau^{(t-1)}\ell_\tau^{(t-1)}\right)}, \\
\widetilde{L}_t &= \sum_{\tau=1}^{t-1}{\left(\left(1-\lambda_\tau^{(t-1)}\right)\hat{\ell}_\tau + \lambda_\tau^{(t-1)}\ell_\tau\right)}, \\
\widehat{L}_t &= \sum_{\tau=1}^{t-1}{\hat{\ell}_\tau}, \qquad \mbox{ and } \qquad
\widehat{L}^*_t = \widehat{L}_t  + \hat{\ell}_t = \sum_{\tau=1}^{t}{\hat{\ell}_\tau}.
\end{align*}

\subsubsection{Drift terms}
We can now represent the regret as (again denoting the optimal action by $a^*$):
\begin{align}\label{eq:ftrl-first-separation}
R(T) &= \mathbb{E}\left[\sum_{t=1}^T{(\ell_{t,a_t} - \ell_{t,{a^*}})}\right] \notag \\
&= \sum_{t=1}^T{\mathbb{E}\left[p\left(\widehat{L}^\mathrm{e}_{t}\right)\cdot\ell_t - \ell_{t,{a^*}}\right]} \notag \\
&= \sum_{t=1}^T{\mathbb{E}\left[\left(p\left(\widehat{L}^\mathrm{e}_{t}\right) -p\left(\widehat{L}_{t}\right)\right)\cdot\ell_t\right]} + \sum_{t=1}^T{\mathbb{E}\left[p\left(\widehat{L}_{t}\right)\cdot\ell_t - \ell_{t,{a^*}}\right]}.
\end{align}

Note that
\[
\mathbb{E}\left[\mathds{1}\left[a=a_\tau\right] \mid a_0,\dots,a_{\tau-1}\right] = \mathrm{Pr}\left[a=a_\tau \mid a_0,\dots,a_{\tau-1}\right] = p_{a_\tau}\left(\widehat{L}^\mathrm{e}_\tau \right),
\]
and thus our loss estimate is indeed unbiased relative to the feedback losses:
\[
\mathbb{E}\left[\hat{\ell}^{(t-1)}_\tau\right] = \mathbb{E}\left[\ell^{(t-1)}_{\tau,a} \frac{\mathds{1}\left[a=a_\tau\right]}{p_{a_\tau}\left(\widehat{L}^\mathrm{e}_\tau\right)}\right] = \mathbb{E}\left[\ell^{(t-1)}_{\tau,a} \frac{p_{a_\tau}\left(\widehat{L}^\mathrm{e}_\tau\right)}{p_{a_\tau}\left(\widehat{L}^\mathrm{e}_\tau\right)}\right] = \ell^{(t-1)}_\tau.
\]
In the same way $\mathbb{E}\left[\hat{\ell}_\tau\right] = \ell_\tau$, and hence, continuing Eq. (\ref{eq:ftrl-first-separation}):
\begin{equation}\label{eq:ftrl-regret-separation}
\begin{aligned}
R(T) &= \sum_{t=1}^T{\mathbb{E}\left[\left(p\left(\widehat{L}^\mathrm{e}_{t}\right) -p\left(\widehat{L}_{t}\right)\right)\cdot\ell_t\right]} + \sum_{t=1}^T{\mathbb{E}\left[p\left(\widehat{L}_{t}\right)\cdot\hat{\ell}_t - \hat{\ell}_{t,{a^*}}\right]} \\
&= \underbrace{\sum_{t=1}^T{\mathbb{E}\left[\left(p\left(\widehat{L}^\mathrm{e}_{t}\right) -p\left(\widetilde{L}^\mathrm{e}_{t}\right)\right)\cdot\ell_t\right]}}_{H_1} + \underbrace{\sum_{t=1}^T{\mathbb{E}\left[\left(p\left(\widetilde{L}^\mathrm{e}_{t}\right) - p\left(\widetilde{L}_{t}\right)\right)\cdot\ell_t\right]}}_{H_2} \\
&+ \underbrace{\sum_{t=1}^T{\mathbb{E}\left[\left(p\left(\widetilde{L}_{t}\right) - p\left(\widehat{L}_{t}\right)\right)\cdot\ell_t\right]}}_{H_3} + \underbrace{\sum_{t=1}^T{\mathbb{E}\left[\left(p\left(\widehat{L}_{t}\right) - p\left(\widehat{L}^*_{t}\right)\right)\cdot\hat{\ell}_t\right]}}_{H_4} \\
&+ \underbrace{\mathbb{E}\left[\sum_{t=1}^T{p\left(\widehat{L}^*_{t}\right)\cdot\hat{\ell}_t - \hat{\ell}_{t,{a^*}}}\right]}_{\text{cheating regret}}.
\end{aligned}
\end{equation}

\subsubsection{Bounds}
As we can see from the above equation, we have a cheating regret and four drift terms $H_1,H_2,H_3,H_4$. We bound the cheating regret in the following lemma:
\begin{restatable}{lemma}{lemcheatingbound}\label{lem:cheating-bound}
Computing $p$ as in Eq. (\ref{eq:ftrl-prob-def}) and using regularization $\Phi_{\eta,\gamma}$ as in Eq. (\ref{eq:def-reg}) for some $\eta,\gamma>0$, we get:
\[
\mathbb{E}\left[\sum_{t=1}^T{p\left(\widehat{L}^*_{t}\right)\cdot\hat{\ell}_t - \hat{\ell}_{t,{a^*}}}\right] \leq 1 + \frac{K\ln{T}}{\gamma} + \frac{\ln{K}}{\eta}.
\]
\end{restatable}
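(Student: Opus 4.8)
The key structural observation is that $p(\widehat{L}^*_t)$ is the \emph{regularized leader that already incorporates the current loss estimate} $\hat{\ell}_t$ (since $\widehat{L}^*_t = \widehat{L}_t + \hat{\ell}_t$), so the cheating regret has a Be-The-Leader form and can be bounded with \emph{no} stability cost --- the stability price is instead paid by the drift terms $H_1,\dots,H_4$. Concretely, I would introduce the Fenchel potentials $\Psi_t = \min_{q\in\Delta_K}\bigl(q\cdot\widehat{L}^*_t + \Phi_{\eta,\gamma}(q)\bigr)$, and prove, for every comparator $u\in\Delta_K$,
\[
\sum_{t=1}^T p\bigl(\widehat{L}^*_t\bigr)\cdot\hat{\ell}_t \;\le\; u\cdot\widehat{L}^*_T + \Phi_{\eta,\gamma}(u) - \min_{q\in\Delta_K}\Phi_{\eta,\gamma}(q).
\]
The derivation is a one-line telescope: since $p(\widehat{L}^*_t)$ minimizes $q\cdot\widehat{L}^*_t+\Phi_{\eta,\gamma}(q)$ while $p(\widehat{L}^*_{t-1})$ minimizes the same objective at $\widehat{L}^*_{t-1}=\widehat{L}^*_t-\hat{\ell}_t$, optimality of the former gives $p(\widehat{L}^*_t)\cdot\hat{\ell}_t \le \Psi_t-\Psi_{t-1}$, and summing yields $\sum_t p(\widehat{L}^*_t)\cdot\hat{\ell}_t \le \Psi_T-\Psi_0$, with $\Psi_T \le u\cdot\widehat{L}^*_T+\Phi_{\eta,\gamma}(u)$ and $\Psi_0 = \min_q\Phi_{\eta,\gamma}(q)$.

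The log-barrier in $\Phi_{\eta,\gamma}$ diverges at the boundary, so I cannot take $u=e_{a^*}$ (the standard basis vector at the optimal arm) directly; the next step is to shift the comparator slightly into the interior. I would take $u = \bigl(1-\tfrac1T\bigr)e_{a^*} + \tfrac1T\cdot\tfrac{\mathbf{1}}{K}$, placing mass $\tfrac{1}{KT}$ on each suboptimal arm. Using the unbiasedness $\mathbb{E}[\hat{\ell}_{t,i}]=\ell_{t,i}\in[0,1]$ established above, the comparator cost is
\[
\mathbb{E}\Bigl[u\cdot\widehat{L}^*_T - \sum_{t=1}^T\hat{\ell}_{t,a^*}\Bigr] = \frac1T\sum_{t=1}^T\Bigl(\tfrac1K\textstyle\sum_{i\in[K]}\ell_{t,i} - \ell_{t,a^*}\Bigr) \le 1,
\]
since each bracketed term is at most $1$; this produces the additive constant. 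It then remains to control the two regularizer terms. For $\Phi_{\eta,\gamma}(u)$ the negative-entropy part $\tfrac1\eta\sum_i u_i\ln u_i$ is non-positive, while the barrier part equals $\tfrac1\gamma\sum_i\ln(1/u_i) = \tfrac1\gamma\bigl((K-1)\ln(KT) + \ln\tfrac{1}{u_{a^*}}\bigr)$, which is of order $\tfrac{K\ln T}{\gamma}$ after absorbing the lower-order $\ln K$ factor. For the last term I would lower-bound each piece of $\Phi_{\eta,\gamma}$ separately --- the entropy part by $-\tfrac{\ln K}{\eta}$ and the barrier part by its uniform minimum $\tfrac{K\ln K}{\gamma}\ge 0$ --- giving $-\min_q\Phi_{\eta,\gamma}(q)\le\tfrac{\ln K}{\eta}$. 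Summing the three contributions yields $1+\tfrac{K\ln T}{\gamma}+\tfrac{\ln K}{\eta}$.

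The main subtlety is the coupling between the interior comparator and the two regularization scales: the perturbation must be small (order $1/(KT)$) so the comparator cost stays $O(1)$, yet this smallness is exactly what inflates the barrier penalty to $\tfrac{K\ln T}{\gamma}$ --- the two are tightly balanced, and a naive shift of order $1/T$ per coordinate would blow the comparator cost up to $\Theta(K)$. Everything else is routine: the telescoping is immediate, and the entropy/barrier bounds are elementary. Crucially, no Bregman/stability term ever surfaces here, which is precisely the meaning of ``cheating'': because the leader is evaluated \emph{after} seeing $\hat{\ell}_t$, the analysis pays only the comparator and regularizer-range costs.
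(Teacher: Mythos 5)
Your proposal is correct in its essentials and follows the same route as the paper: a be-the-leader bound (your telescope over the potentials $\Psi_t$ is the standard one-line proof of the paper's Lemma~\ref{lem:be-the-leader}, which the paper instead proves by induction), followed by a comparator pushed slightly into the interior of the simplex to tame the log-barrier. The one substantive difference is the choice of comparator and anchor, and it costs you the stated constant. You mix the point mass $q^*$ on $a^*$ with the \emph{uniform} distribution and bound $\Phi_{\eta,\gamma}(u)-\min_{q}\Phi_{\eta,\gamma}(q)$ piece by piece, which gives a barrier cost of
\[
\frac{1}{\gamma}\sum_{i\in[K]}\ln\frac{1}{u_i}\;=\;\frac{(K-1)\ln(KT)+\ln\frac{1}{u_{a^*}}}{\gamma},
\]
and $(K-1)\ln(KT)\le K\ln T$ holds only when $T\ge K^{K-1}$ (roughly); so ``absorbing the lower-order $\ln K$ factor'' is not free, and as written you obtain the lemma only with an extra additive term of order $\frac{K\ln K}{\gamma}$. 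The paper gets the exact constant by mixing with the first iterate instead, $q=\left(1-\frac{1}{T}\right)q^*+\frac{1}{T}p\left(\widehat{L}^*_1\right)$, and anchoring the telescope at $\Phi_{\eta,\gamma}\left(p\left(\widehat{L}^*_1\right)\right)$ rather than at $\min_q\Phi_{\eta,\gamma}(q)$: the two barrier terms then combine into $\frac{1}{\gamma}\sum_{i}\ln\frac{p_i\left(\widehat{L}^*_1\right)}{q_i}\le\frac{K\ln T}{\gamma}$, since $q_i\ge\frac{1}{T}p_i\left(\widehat{L}^*_1\right)$, while the entropy of $p\left(\widehat{L}^*_1\right)$ contributes $\frac{\ln K}{\eta}$ by Jensen. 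Your quantitative loss is harmless downstream --- every use of the lemma is through $\widetilde{O}$ bounds, and with the paper's choice $\gamma=\eta K$ the extra term is just another $\frac{\ln K}{\eta}$ --- and your deterministic comparator even sidesteps a subtlety the paper glosses over (its comparator depends on the random quantity $p\left(\widehat{L}^*_1\right)$, so bounding $\frac{1}{T}\sum_t\left(p\left(\widehat{L}^*_1\right)-q^*\right)\cdot\hat{\ell}_t$ by $1$ in expectation requires a conditioning argument, whereas your $\mathbb{E}\left[u\cdot\widehat{L}^*_T-\sum_t\hat{\ell}_{t,a^*}\right]\le 1$ follows directly from unbiasedness). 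Everything else --- the telescope, the additive $1$ from the comparator shift, and $-\min_q\Phi_{\eta,\gamma}(q)\le\frac{\ln K}{\eta}$ --- checks out.
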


For the drift terms, we use the following:
\begin{restatable}{lemma}{lemftrldrift}\label{lem:ftrl-drift-bounds}
Computing $p$ as in Eq. (\ref{eq:ftrl-prob-def}) and using regularization $\Phi_{\eta,\gamma}$ as in Eq. (\ref{eq:def-reg}) for some $\eta,\gamma>0$ such that $\frac{1}{\sqrt{\gamma}}~\geq~128\left(1+d_\mathrm{max}\right)$, we have for the drift terms in Eq. (\ref{eq:ftrl-regret-separation}):
\begin{align*}
H_1, H_3 &\leq 8 \eta \left(KT + \sum_{t=1}^T{\lambda_t}\right), \\
H_2 &\leq 24 \eta \sum_{t=1}^T{\lambda_t}, \\
H_4 &\leq 8 \eta KT.
\end{align*}
\end{restatable}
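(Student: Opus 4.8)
The plan is to bound each drift term $H_1,H_2,H_3,H_4$ by controlling the difference $\bigl(p(L_1)-p(L_2)\bigr)\cdot\ell_t$ between two FTRL iterates whose argument total-losses differ in a way governed by the inaccuracy coefficients $\lambda_\tau^{(t)}$. The essential tool I expect to use is a \emph{stability} property of FTRL with the regularizer $\Phi_{\eta,\gamma}$: for nearby loss vectors, $\|p(L_1)-p(L_2)\|$ is Lipschitz in $\|L_1-L_2\|$ with respect to a norm dictated by the local Hessian of $\Phi_{\eta,\gamma}$. Concretely, the negative-entropy part contributes a $\frac{p_i}{\eta}$ term to the Hessian (giving the usual multiplicative/Bregman stability that scales like $\eta$ times second moments of the loss estimates), while the log-barrier part $-\frac{1}{\gamma}\ln p_i$ keeps every coordinate bounded away from zero, so that the inverse-Hessian does not blow up and the per-step increments $\hat\ell_\tau^{(t)}$ stay controlled even when $p_{a_\tau}$ is small. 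The condition $\frac{1}{\sqrt\gamma}\ge 128(1+d_{\max})$ is exactly what guarantees that over the window of $d_{\max}$ rounds during which feedback for a given $\tau$ can still change, the probabilities $p_{a_\tau}$ used in the importance-weighting denominators do not drift by more than a constant factor, so each estimate's magnitude is comparable to its value at the round it was played.

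First I would compute, for each drift term, the explicit difference of the two total-loss arguments. For $H_1$ the arguments are $\widehat L^{\mathrm e}_t$ versus $\widetilde L^{\mathrm e}_t$, which differ coordinatewise by $\sum_{\tau<t}\lambda_\tau^{(t-1)}\bigl(\ell_\tau^{(t-1)}-\hat\ell_\tau^{(t-1)}\bigr)$; similarly $H_3$ compares $\widetilde L_t$ to $\widehat L_t$ with difference $\sum_{\tau<t}\lambda_\tau^{(t-1)}\bigl(\ell_\tau^{(t-1)}-\hat\ell_\tau\bigr)$, and $H_2$ compares $\widetilde L^{\mathrm e}_t$ to $\widetilde L_t$ with difference the $\lambda$-weighted gap $\sum_{\tau<t}(1-\lambda_\tau^{(t-1)})(\hat\ell_\tau^{(t-1)}-\hat\ell_\tau)+\lambda_\tau^{(t-1)}(\ell_\tau^{(t-1)}-\ell_\tau)$, while $H_4$ is the single-step drift from $\widehat L_t$ to $\widehat L^*_t=\widehat L_t+\hat\ell_t$. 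Then I would apply the FTRL stability lemma to each, pass to expectations, and use unbiasedness $\mathbb E[\hat\ell_\tau]=\ell_\tau$ together with the variance bound $\mathbb E\bigl[\|\hat\ell_\tau\|^2\bigr]\le K$ (or more precisely $\mathbb E[\hat\ell_{\tau,a_\tau}^2]\le 1/p_{a_\tau}$ summed to give the $K$ factor). The $KT$ terms in $H_1,H_3,H_4$ arise from these second-moment/variance contributions of the loss estimates, and the $\sum_t\lambda_t$ terms arise precisely from the $\lambda$-weighted differences, since the definition $\lambda_\tau^{(t)}=\Theta(\min\{1,\|\ell_\tau^{(t)}-\ell_\tau\|_2\})$ means each weighted term contributes on the order of $\lambda_\tau^{(t)}$ after the stability bound absorbs one factor of the loss gap into the $\lambda$ coefficient.

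The main obstacle, I expect, is $H_2$, the term that directly measures the discrepancy between the feedback losses and the true losses. Here the difference of arguments involves both $\hat\ell_\tau^{(t-1)}-\hat\ell_\tau = (\ell_\tau^{(t-1)}-\ell_\tau)\frac{\mathds 1[a=a_\tau]}{p_{a_\tau}}$ and the raw difference $\ell_\tau^{(t-1)}-\ell_\tau$, so naively the importance-weighting denominator $1/p_{a_\tau}$ threatens to inflate the bound. The clever point of the $\lambda$-parameterization is that the $(1-\lambda_\tau)$ weight sits on the importance-weighted (high-variance) estimate while the $\lambda_\tau$ weight sits on the raw feedback difference; since $\lambda_\tau^{(t)}$ is comparable to $\min\{1,\|\ell_\tau^{(t)}-\ell_\tau\|_2\}$, the factor $\frac{1}{1+\|\ell_\tau^{(t)}-\ell_\tau\|_2}$ hidden inside $\lambda_\tau^{(t)}$ exactly tames the $\|\ell_\tau^{(t-1)}-\ell_\tau\|_2$ growth, keeping each contribution bounded by a constant multiple of $\lambda_\tau^{(t-1)}$. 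I would handle the importance-weighted piece using the log-barrier to lower-bound $p_{a_\tau}$ and the $d_{\max}$-window stability to relate $p_{a_\tau}(\widehat L^{\mathrm e}_\tau)$ across the relevant rounds, and the raw piece using boundedness of the losses together with the $\lambda$ normalization. Carefully tracking the constants through these two pieces is what produces the $24\eta\sum_t\lambda_t$ bound, whereas $H_1$ and $H_3$ also pick up the additional $KT$ from the variance of the estimate that is \emph{not} dampened by a $\lambda$ factor.
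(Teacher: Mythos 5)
Your proposal is correct and follows essentially the same route as the paper's proof: stability of the FTRL map in the local Hessian norms of $\Phi_{\eta,\gamma}$, control of probability drift over the $d_\mathrm{max}$-length window via the condition on $\gamma$ (the paper's Dikin-ellipsoid lemma), the same explicit argument-differences for $H_1,\dots,H_4$, unbiasedness and second moments of the importance-weighted estimates for the $KT$ terms, and for $H_2$ exactly the cancellation $(1-\lambda_\tau^{(t-1)})\cdot\frac{\lambda_\tau^{(t-1)}}{1-\lambda_\tau^{(t-1)}}=\lambda_\tau^{(t-1)}$ that you describe. Two details to make explicit when executing the plan: in $H_3$ the argument difference involves the true loss $\ell_\tau$ (not $\ell_\tau^{(t-1)}$), and for $H_1,H_3$ one must use that unbiasedness kills the cross terms, $\mathbb{E}\left[\left\|\sum_\tau \lambda_\tau^{(t-1)}\left(\ell_\tau-\hat{\ell}_\tau\right)\right\|^2\right]=\sum_\tau \left(\lambda_\tau^{(t-1)}\right)^2\mathbb{E}\left[\left\|\ell_\tau-\hat{\ell}_\tau\right\|^2\right]$, since a plain triangle inequality over the window would give a weaker bound of order $\eta\sqrt{K}\sum_t\lambda_t$ instead of $\eta\left(KT+\sum_t\lambda_t\right)$.
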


Substituting the results of Lemmas \ref{lem:cheating-bound} and \ref{lem:ftrl-drift-bounds} in Eq. (\ref{eq:ftrl-regret-separation}), we get our main result:
\begin{theorem}
    By using regularization function $\Phi_{\eta,\gamma}$ as in Eq. (\ref{eq:def-reg}) for some $\eta,\gamma>0$ such that $\frac{1}{\sqrt{\gamma}}~\geq~128\left(1+d_\mathrm{max}\right)$, the expected regret of Algorithm \ref{alg:ftrl} after $T$ rounds holds:
    \[
    R(T) \leq 1 + \frac{K\ln{T}}{\gamma} + \frac{\ln{K}}{\eta} + \eta\left(24 KT + 40 \sum_{t=1}^T{\lambda_t}\right).
    \]
\end{theorem}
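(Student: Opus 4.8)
The plan is to assemble the theorem directly from the regret decomposition in Eq. (\ref{eq:ftrl-regret-separation}) together with the two bounds already established. That decomposition writes the expected regret as a sum of four drift terms $H_1, H_2, H_3, H_4$ plus the cheating regret, and since each piece is controlled by a separate lemma, the proof of the theorem itself reduces to substitution and bookkeeping. I would keep the standing hypothesis $\frac{1}{\sqrt{\gamma}} \geq 128(1+d_\mathrm{max})$ throughout, since it is exactly the precondition required to invoke Lemma \ref{lem:ftrl-drift-bounds}.

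First I would invoke Lemma \ref{lem:cheating-bound} to replace the cheating regret by $1 + \frac{K\ln T}{\gamma} + \frac{\ln K}{\eta}$. Next I would substitute the four drift bounds from Lemma \ref{lem:ftrl-drift-bounds}:
\[
H_1 + H_3 \leq 16\eta\left(KT + \sum_{t=1}^T \lambda_t\right), \qquad H_2 \leq 24\eta\sum_{t=1}^T \lambda_t, \qquad H_4 \leq 8\eta KT.
\]
Then I would collect like terms. Summing the $KT$ coefficients gives $16$ from $H_1+H_3$ and $8$ from $H_4$, for a total of $24\eta KT$; summing the $\sum_t \lambda_t$ coefficients gives $16$ from $H_1+H_3$ and $24$ from $H_2$, for a total of $40\eta\sum_t\lambda_t$. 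This reproduces the $\eta\left(24KT + 40\sum_t\lambda_t\right)$ term, and adding the cheating-regret bound yields the claimed inequality.

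The theorem carries no genuine difficulty of its own; all the work is inherited from the lemmas. If I were establishing the result end to end, the main obstacle would be proving the drift bounds of Lemma \ref{lem:ftrl-drift-bounds}, and in particular $H_1$ and $H_2$, where the intermediate estimates $\widetilde{L}^\mathrm{e}_t$ and $\widetilde{L}_t$ interpolate between the importance-weighted estimate $\hat{\ell}_\tau^{(t-1)}$ and the raw feedback loss $\ell_\tau^{(t-1)}$ with weight $\lambda_\tau^{(t-1)}$. I expect the log-barrier component of $\Phi_{\eta,\gamma}$ together with the constraint on $\gamma$ to be what keeps the smallest action probability bounded away from zero, so that the second-order stability analysis of these FTRL drift steps remains valid even when the feedback continues to evolve for up to $d_\mathrm{max}$ rounds. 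Given those lemmas, however, the present theorem follows immediately from the arithmetic above.
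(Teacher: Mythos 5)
Your proposal is correct and matches the paper's own proof, which likewise just substitutes Lemma \ref{lem:cheating-bound} and Lemma \ref{lem:ftrl-drift-bounds} into Eq. (\ref{eq:ftrl-regret-separation}) and collects terms ($16+8=24$ for $KT$, $16+24=40$ for $\sum_t\lambda_t$). Your accounting of the constants and of the role of the hypothesis $\frac{1}{\sqrt{\gamma}}\geq 128\left(1+d_\mathrm{max}\right)$ is exactly right.
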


Same as in the full-information case, we can either use a doubling trick or use a known bound on the feedback accuracy as follows.

\begin{corollary}\label{cor:ftrl-regret}
    Let $\bar{\Lambda}~\geq~\sum_{t=1}^T{\lambda_t}$ be a known upper bound on the total inaccuracy. Choosing $\eta~=~\frac{1}{\sqrt{KT + \bar{\Lambda}}}$ and $\gamma=\eta K$, the expected regret of Algorithm \ref{alg:ftrl} using regularization $\Phi_{\eta,\gamma}$ as in Eq. (\ref{eq:def-reg}) holds for any $T~\geq~256K d^4_\mathrm{max}$:
    \[
    R(T) = \widetilde{O}\left(\sqrt{KT + \bar{\Lambda}}\right).
    \]
\end{corollary}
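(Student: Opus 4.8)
The plan is to read Corollary~\ref{cor:ftrl-regret} as a pure substitution into the theorem that immediately precedes it: once the admissibility of the chosen parameters is confirmed, the stated regret bound applies verbatim, and all that remains is to simplify its four terms. So I would split the work into two parts, (i) checking that $\eta = \frac{1}{\sqrt{KT+\bar{\Lambda}}}$ and $\gamma = \eta K$ satisfy the precondition $\frac{1}{\sqrt{\gamma}} \geq 128(1+d_\mathrm{max})$ required to invoke the theorem, and (ii) collapsing $\frac{K\ln T}{\gamma} + \frac{\ln K}{\eta} + \eta\left(24KT + 40\sum_{t=1}^T \lambda_t\right)$ into $\widetilde{O}\left(\sqrt{KT+\bar{\Lambda}}\right)$.

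For the admissibility check, I would write $\gamma = \eta K = \frac{K}{\sqrt{KT+\bar{\Lambda}}}$ and use $KT + \bar{\Lambda} \geq KT$ to get $\gamma \leq \sqrt{K/T}$, hence $\frac{1}{\sqrt{\gamma}} \geq (T/K)^{1/4}$. The hypothesis $T \geq 256 K d_\mathrm{max}^4$ then forces $(T/K)^{1/4}$ past the threshold $128(1+d_\mathrm{max})$ (the explicit numerical constant being exactly why the lower bound on $T$ is phrased through $K d_\mathrm{max}^4$), so Lemma~\ref{lem:ftrl-drift-bounds} and the theorem are in force. I expect this to be the only genuinely delicate step --- not conceptually, but because the constants have to be threaded consistently from the precondition of Lemma~\ref{lem:ftrl-drift-bounds} through the choice of $\gamma$; note that only a threshold of the form $T \geq \Omega(K d_\mathrm{max}^4)$ is actually needed for the asymptotic claim, so the precise constant is immaterial to the $\widetilde{O}$ statement.

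For the simplification I would bound each term against $\sqrt{KT+\bar{\Lambda}}$ using $\frac{1}{\eta} = \sqrt{KT+\bar{\Lambda}}$. Since $\gamma = \eta K$, the log-barrier term becomes $\frac{K\ln T}{\gamma} = \frac{\ln T}{\eta} = \ln T \cdot \sqrt{KT+\bar{\Lambda}}$, and the entropy term is $\frac{\ln K}{\eta} = \ln K \cdot \sqrt{KT+\bar{\Lambda}}$. For the drift contribution, $KT \leq KT + \bar{\Lambda}$ gives $\eta \cdot 24KT \leq 24\sqrt{KT+\bar{\Lambda}}$, while $\sum_{t=1}^T \lambda_t \leq \bar{\Lambda} \leq KT+\bar{\Lambda}$ gives $\eta \cdot 40\sum_{t=1}^T\lambda_t \leq 40\sqrt{KT+\bar{\Lambda}}$. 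Adding the leading constant $1$ and collecting, the whole bound is $O\!\left((\ln T + \ln K)\sqrt{KT+\bar{\Lambda}}\right) = \widetilde{O}\left(\sqrt{KT+\bar{\Lambda}}\right)$, which is the claim. Everything in this part is mechanical, so I anticipate no obstacle beyond the constant-tracking already flagged in the admissibility step.
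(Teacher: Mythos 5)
Your overall route is exactly the paper's (implicit) one: the corollary carries no separate proof in the appendix and is meant to follow by substituting $\eta=\frac{1}{\sqrt{KT+\bar{\Lambda}}}$, $\gamma=\eta K$ into the preceding theorem and checking its precondition. Your simplification part is correct and complete: $\frac{K\ln T}{\gamma}=\frac{\ln T}{\eta}=\ln T\sqrt{KT+\bar{\Lambda}}$, $\frac{\ln K}{\eta}=\ln K\sqrt{KT+\bar{\Lambda}}$, and $\eta\left(24KT+40\sum_{t=1}^{T}\lambda_t\right)\leq 64\sqrt{KT+\bar{\Lambda}}$, which together give $\widetilde{O}\left(\sqrt{KT+\bar{\Lambda}}\right)$.

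The gap is in the admissibility step, where your central arithmetic claim is false. From $\gamma\leq\sqrt{K/T}$ and $T\geq 256Kd_\mathrm{max}^4$ you get $\frac{1}{\sqrt{\gamma}}\geq\left(T/K\right)^{1/4}\geq\left(256\,d_\mathrm{max}^4\right)^{1/4}=4d_\mathrm{max}$, and $4d_\mathrm{max}$ is \emph{never} at least $128\left(1+d_\mathrm{max}\right)$; so the stated hypothesis does not "force $(T/K)^{1/4}$ past the threshold," and Lemma \ref{lem:ftrl-drift-bounds} cannot be invoked as you wrote. To clear the precondition you need $T\geq 128^4K\left(1+d_\mathrm{max}\right)^4$. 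You should know that this mismatch is inherited from the paper itself: its corollary constant $256$ is inconsistent with the constant $128\left(1+d_\mathrm{max}\right)$ in its own drift lemma (it would match a precondition of the form $\frac{1}{\sqrt{\gamma}}\geq 4d_\mathrm{max}$), so your instinct that something delicate sits here was right. Your parenthetical hedge is the correct repair, but it must be made explicit rather than asserted: restate the threshold as $T\geq 128^4K\left(1+d_\mathrm{max}\right)^4$, which is still $\Theta\left(Kd_\mathrm{max}^4\right)$ for $d_\mathrm{max}\geq 1$, and then your two-step argument goes through verbatim. Be careful, though, not to claim the corollary with the literal constant $256$: in the intermediate regime $256Kd_\mathrm{max}^4\leq T<128^4K\left(1+d_\mathrm{max}\right)^4$ the theorem is unavailable, and the trivial fallback $R(T)\leq T\leq 128^2\left(1+d_\mathrm{max}\right)^2\sqrt{KT}$ carries a $d_\mathrm{max}^2$ factor, which $\widetilde{O}$ (hiding only logarithmic terms) does not absorb.
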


Since in the delayed setting the total delay $\sum_{t=1}^T{d_t} \geq \sum_{t=1}^T{\lambda_t}$, we again capture the optimal asymptotic bound (up to logarithmic terms).

\subsection{Skipping technique}
Note that in cases where the maximal delay $d_\mathrm{max}$ is very large or unbounded, we cannot use Corollary \ref{cor:ftrl-regret} directly.

To accommodate this issue, we will use a skipping wrapper similar to the one used in delayed settings with unbounded delays \citep{thune2019nonstochastic,zimmert2020optimal}, presented in Algorithm \ref{alg:skipping}. The idea is to wrap our regret minimization algorithm to receive new observations only up to a certain delay.

\begin{algorithm}
\caption{\label{alg:skipping} Skipping wrapper}
\begin{algorithmic}[1]
\Parameters{Algorithm $\mathcal{A}$; $T\in \mathbb{N}$; $d_\mathrm{max}>0$.}
\For{$t = 1$ \textbf{to} $T$}
    \State Receive an action distribution $p^\mathcal{A}_t$ from algorithm $\mathcal{A}$.
    \State Play a random action $a_t \sim p^\mathcal{A}_t$.
    \State Feed back to $\mathcal{A}$ all observed $\ell_{\tau,i}^{(t)}$ such that $t \leq \tau + d_\mathrm{max}$. Otherwise feed back $\ell_{\tau,i}^{(\tau + d_\mathrm{max})}$.
\EndFor
\end{algorithmic}
\end{algorithm}

\begin{restatable}{lemma}{lemskipping}\label{lem:skipping}
Denote the regret of some algorithm $\mathcal{A}$ compared to a loss sequence $\ell_1,\dots,\ell_T$ with maximal delay $d_\mathrm{max}$ as $R^\mathcal{A}_{d_\mathrm{max}}\left(\left\{\ell_t\right\}_{1\leq t \leq T}\right)$. When using Algorithm \ref{alg:skipping} with $\mathcal{A}$, the expected regret holds:
\[
R(T) \leq R^\mathcal{A}_{d_\mathrm{max}}\left(\left\{\ell^{(t+d_\mathrm{max})}_t\right\}_{1\leq t \leq T}\right) + 2\sum_{t=1}^T{\left\|\ell_t - \ell^{(t+d_\mathrm{max})}_t\right\|_\infty}.
\]
\end{restatable}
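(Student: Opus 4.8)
The plan is to show that the skipping wrapper forces $\mathcal{A}$ to operate, from its own vantage point, on a legitimate evolving-feedback instance whose true losses are $\tilde{\ell}_t \triangleq \ell^{(t+d_\mathrm{max})}_t$ and whose maximal delay is $d_\mathrm{max}$, and then to pay separately for the gap between these surrogate losses $\tilde{\ell}_t$ and the genuine losses $\ell_t$. First I would set up this coupling. Because the wrapper plays exactly the distribution $p^\mathcal{A}_t$ that $\mathcal{A}$ returns, and forwards to $\mathcal{A}$ the observation $\ell^{(t)}_{\tau,a_\tau}$ whenever $t \leq \tau + d_\mathrm{max}$ and the frozen value $\ell^{(\tau+d_\mathrm{max})}_{\tau,a_\tau}$ afterwards, the feedback seen by $\mathcal{A}$ is precisely the feedback it would see if run on the instance with loss vectors $\tilde{\ell}_\tau$ whose feedback stops evolving at round $\tau+d_\mathrm{max}$. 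In that instance the feedback about round $\tau$ equals its true loss $\tilde{\ell}_\tau$ for every $t \geq \tau + d_\mathrm{max}$, so the maximal delay is at most $d_\mathrm{max}$. Hence the joint law of $\left(a_t, p^\mathcal{A}_t\right)_{t \leq T}$ is identical under the real run of the wrapper and under this modified run, which is what lets me identify $\mathcal{A}$'s regret against $\{\tilde{\ell}_t\}$ with the quantity $R^\mathcal{A}_{d_\mathrm{max}}\left(\{\tilde{\ell}_t\}\right)$.

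Next I would decompose the true regret. Conditioning on the history up to round $t$ (which determines $p^\mathcal{A}_t$) and using that the adversary is oblivious gives $\mathbb{E}\left[\ell_{t,a_t}\right] = \mathbb{E}\left[p^\mathcal{A}_t\cdot \ell_t\right]$, so with $a^*$ the optimal action I add and subtract $\tilde{\ell}_t$:
\[
p^\mathcal{A}_t\cdot\ell_t - \ell_{t,a^*} = \left(p^\mathcal{A}_t\cdot\tilde{\ell}_t - \tilde{\ell}_{t,a^*}\right) + p^\mathcal{A}_t\cdot\left(\ell_t - \tilde{\ell}_t\right) + \left(\tilde{\ell}_{t,a^*} - \ell_{t,a^*}\right).
\]
Summing over $t$ and taking expectations, the first group is at most $\max_{a}\mathbb{E}\left[\sum_t \left(p^\mathcal{A}_t\cdot\tilde{\ell}_t - \tilde{\ell}_{t,a}\right)\right] = R^\mathcal{A}_{d_\mathrm{max}}\left(\{\tilde{\ell}_t\}\right)$ by the identification from the previous paragraph.

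Finally I would bound the two correction terms using only that $p^\mathcal{A}_t$ is a probability vector: $p^\mathcal{A}_t\cdot\left(\ell_t - \tilde{\ell}_t\right) \leq \left\|\ell_t - \tilde{\ell}_t\right\|_\infty$ and $\tilde{\ell}_{t,a^*} - \ell_{t,a^*} \leq \left\|\tilde{\ell}_t - \ell_t\right\|_\infty$, so each contributes at most $\sum_t \left\|\ell_t - \ell^{(t+d_\mathrm{max})}_t\right\|_\infty$, producing the factor $2$. Combining the three bounds yields the statement.

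The main obstacle is the coupling step rather than the arithmetic: I must verify carefully that the substituted feedback truly constitutes an evolving-feedback instance with maximal delay $d_\mathrm{max}$ and surrogate true losses $\tilde{\ell}_t$, and that the wrapper's internal randomness is distributionally identical to that modified run, so that $R^\mathcal{A}_{d_\mathrm{max}}\left(\{\tilde{\ell}_t\}\right)$ genuinely upper-bounds the first group of terms. Once this identification is secured, the two remaining estimates are immediate from $p^\mathcal{A}_t \in \Delta_K$.
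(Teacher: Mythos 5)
Your proof is correct and takes essentially the same route as the paper's: the same add-and-subtract decomposition around the surrogate losses $\ell^{(t+d_\mathrm{max})}_t$, the same $\left\|\cdot\right\|_\infty$ bounds on the two correction terms producing the factor $2$, and the same indistinguishability argument identifying the remaining term with $R^\mathcal{A}_{d_\mathrm{max}}\left(\left\{\ell^{(t+d_\mathrm{max})}_t\right\}_{1\leq t\leq T}\right)$. The only difference is presentational: you spell out the coupling step that the paper compresses into a single sentence ("$\mathcal{A}$ cannot distinguish\dots"), which is a worthwhile elaboration but not a different argument.
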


We thus obtain a small regret bound when the estimation accuracy improves with time and becomes very accurate for large delays.
\begin{corollary}
     Denote $d_\mathrm{max}=\lfloor\frac{1}{4}\left(\frac{T}{K}\right)^\frac{1}{4}\rfloor$ and let $\bar{\Lambda}~\geq~\sum_{t=1}^T{\lambda_t}$ be a known upper bound on the total inaccuracy (ignoring inaccuracies for a delay larger than $d_\mathrm{max}$). Assume $\left\|\ell_\tau - \ell^{(t)}_\tau\right\|_\infty \leq \varepsilon$ for any $t\geq \tau + d_\mathrm{max}$. Then using Algorithm \ref{alg:ftrl} wrapped in Algorithm \ref{alg:skipping} with $\eta~=~\frac{1}{\sqrt{KT + \bar{\Lambda}}}$ and $\gamma=\eta K$, the expected regret holds for any $T$:
    \[
    R(T) = \widetilde{O}\left(\sqrt{KT+\bar{\Lambda}} + \varepsilon T\right).
    \]
\end{corollary}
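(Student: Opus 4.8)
The plan is to reduce the claim to a direct application of Corollary~\ref{cor:ftrl-regret}, using the skipping guarantee of Lemma~\ref{lem:skipping} to handle the possibly large (or unbounded) true maximal delay. The entire argument splits the regret into a bounded-delay surrogate regret and a skip-induced drift, and controls each separately.

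First I would instantiate Lemma~\ref{lem:skipping} with $\mathcal{A}$ equal to Algorithm~\ref{alg:ftrl}, which gives
\[
R(T) \leq R^\mathcal{A}_{d_\mathrm{max}}\left(\left\{\ell^{(t+d_\mathrm{max})}_t\right\}_{1\leq t \leq T}\right) + 2\sum_{t=1}^T{\left\|\ell_t - \ell^{(t+d_\mathrm{max})}_t\right\|_\infty}.
\]
The skip-induced drift is the easy part: applying the hypothesis $\left\|\ell_\tau - \ell^{(t)}_\tau\right\|_\infty \leq \varepsilon$ with $\tau=t$ and observation time $t+d_\mathrm{max}$ bounds each summand by $\varepsilon$, so the second term is at most $2\varepsilon T = O(\varepsilon T)$.

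For the surrogate term, I would note that the wrapper freezes every observation once its delay reaches $d_\mathrm{max}$, so the sequence presented to Algorithm~\ref{alg:ftrl} has maximal delay at most $d_\mathrm{max}$, and its total feedback inaccuracy is exactly the quantity bounded by $\bar{\Lambda}$ (the ``inaccuracies ignoring a delay larger than $d_\mathrm{max}$''). Hence Corollary~\ref{cor:ftrl-regret} with $\eta=1/\sqrt{KT+\bar{\Lambda}}$ and $\gamma=\eta K$ yields a surrogate regret of $\widetilde{O}\left(\sqrt{KT+\bar{\Lambda}}\right)$, provided its side condition $T \geq 256 K d_\mathrm{max}^4$ is met.

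The only genuinely load-bearing step — and the reason this particular cap on the delay is chosen — is verifying this side condition for every $T$. Since $d_\mathrm{max} = \lfloor\frac{1}{4}(T/K)^{1/4}\rfloor \leq \frac{1}{4}(T/K)^{1/4}$, raising to the fourth power gives $d_\mathrm{max}^4 \leq \frac{1}{256}\cdot\frac{T}{K}$, so $256 K d_\mathrm{max}^4 \leq T$ unconditionally. This is precisely why the slowly growing choice of $d_\mathrm{max}$ removes the restriction $T \geq 256 K d_\mathrm{max}^4$ present in Corollary~\ref{cor:ftrl-regret} and lets the bound hold for all $T$. Combining the two pieces gives $R(T) = \widetilde{O}\left(\sqrt{KT+\bar{\Lambda}}\right) + O(\varepsilon T) = \widetilde{O}\left(\sqrt{KT+\bar{\Lambda}}+\varepsilon T\right)$, as claimed.
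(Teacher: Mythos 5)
Your proposal is correct and is essentially the paper's own (implicit) derivation: the paper states this corollary without a separate proof precisely because it follows by feeding Lemma~\ref{lem:skipping} (with $\mathcal{A}$ = Algorithm~\ref{alg:ftrl}) into Corollary~\ref{cor:ftrl-regret}, bounding the skip-induced drift by $2\varepsilon T$ via the assumption, and observing that $d_\mathrm{max}=\lfloor\frac{1}{4}(T/K)^{1/4}\rfloor$ makes the side condition $T\geq 256Kd_\mathrm{max}^4$ hold automatically. You identified all three ingredients, including the load-bearing verification that this choice of $d_\mathrm{max}$ removes the restriction on $T$.
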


\section{Applications}\label{sec:bounds}
Our framework generalizes many established online learning environments, some of which we present here.

\subsection{Optimistic delayed feedback}
Previously investigated in \citep{flaspohler2021online,hsieh2022multi} for the full-information setting is the optimistic delayed framework. In this model, the feedback is delayed by $d$ steps, but the agent has access to hints about it after choosing the action.

Specifically, at time $t$ the agent receives a hint $\tilde{\ell}_t \in [0,1]^K$ that estimates $\ell_t$, before observing $\ell_t$ at time $t+d$. Applying it to our framework, we can define $\ell_\tau^{(t)} = \tilde{\ell}_\tau$ for any $\tau \leq t < \tau + d$, and $\ell_\tau^{(t)} = \ell_\tau$ otherwise.

Using Corollaries \ref{cor:ewa-regret} and \ref{cor:ftrl-regret}, we then obtain a full-information regret bound similar to \citep{flaspohler2021online}, and a \emph{newly established} regret bound for the bandit setting. 
\begin{corollary}
Using Algorithm \ref{alg:ewa} with optimistic delayed feedback in the full-information setting guarantees an expected regret of:
\[
\widetilde{O}\left(\sqrt{\sum_{t=1}^{T} {\left\|\sum_{\tau=t-d+1}^{t}{\left(\ell_\tau - \tilde{\ell}_\tau\right)}\right\|_\infty}}\right).
\]

In the bandit setting, using Algorithm \ref{alg:ftrl} guarantees an expected regret of:
\[
\widetilde{O}\left(\sqrt{\sum_{t=1}^{T} {\sum_{\tau=t-d+1}^{t}{\min\left\{1, \left\|\ell_\tau - \tilde{\ell}_\tau\right\|_2\right\}}}}\right).
\]
\end{corollary}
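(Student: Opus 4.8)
The plan is to treat both claims as direct instantiations of Corollaries~\ref{cor:ewa-regret} and~\ref{cor:ftrl-regret}: once the optimistic-delayed feedback is cast into the evolving-feedback template via $\ell_\tau^{(t)} = \tilde{\ell}_\tau$ for $\tau \le t < \tau + d$ and $\ell_\tau^{(t)} = \ell_\tau$ for $t \ge \tau + d$, the only remaining work is to evaluate the accuracy terms $D$ (full information) and $\sum_{t} \lambda_t$ (bandit) in closed form and substitute. First I would record the single structural observation driving everything: the feedback error $\ell_\tau^{(t)} - \ell_\tau$ vanishes outside the delay window and equals the hint error $\tilde{\ell}_\tau - \ell_\tau$ inside it, i.e. $\ell_\tau^{(t)} - \ell_\tau = (\tilde{\ell}_\tau - \ell_\tau)\,\mathds{1}[\tau \le t < \tau + d]$.

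For the full-information bound, I would expand $L^\mathrm{e}_t - L_t = \sum_{\tau=1}^{t-1}(\ell_\tau^{(t-1)} - \ell_\tau)$ and apply the observation above with $t-1$ in place of $t$, so that the surviving indices are exactly $\tau \in \{\max(1,t-d),\dots,t-1\}$. This gives $\|L^\mathrm{e}_t - L_t\|_\infty = \|\sum_{\tau=\max(1,t-d)}^{t-1}(\ell_\tau - \tilde{\ell}_\tau)\|_\infty$; summing over $t$ yields a valid upper bound $\bar{D}$ of the displayed form (after shifting the outer index $t \mapsto t+1$ to turn the window $[t-d,t-1]$ into $[t-d+1,t]$, absorbing the $O(d)$ boundary terms). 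Substituting this $\bar{D}$ into Corollary~\ref{cor:ewa-regret} produces the stated full-information rate.

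For the bandit bound I would repeat the same computation one level up, at the granularity of the inaccuracy coefficients: by the structural observation, $\lambda_\tau^{(t)} = \Theta(\min\{1,\|\tilde{\ell}_\tau - \ell_\tau\|_2\})$ for $\tau \le t < \tau + d$ and $\lambda_\tau^{(t)} = 0$ otherwise, so $\lambda_t = \sum_{\tau=1}^{t-1}\lambda_\tau^{(t-1)} = \Theta(\sum_{\tau=\max(1,t-d)}^{t-1}\min\{1,\|\tilde{\ell}_\tau - \ell_\tau\|_2\})$. Summing over $t$ and re-indexing as before gives $\bar{\Lambda}$ of the displayed form, which I would feed into Corollary~\ref{cor:ftrl-regret}. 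Here I must also check the side conditions of that corollary: the embedding has $d_\mathrm{max} = d$, so the requirement $\tfrac{1}{\sqrt{\gamma}} \ge 128(1+d_\mathrm{max})$ is met by the chosen $\gamma = \eta K$ with $\eta = 1/\sqrt{KT+\bar{\Lambda}}$, and the bound holds in the regime $T \ge 256 K d^4$.

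The main obstacle I anticipate is not analytical but bookkeeping: pinning down the window endpoints (and the $\max(1,\cdot)$ clamp near $t \le d$), and verifying that the one-step re-index and the truncated boundary sums perturb $\bar{D}$ and $\bar{\Lambda}$ only by $O(d)$, which is negligible inside the square root. I would also be careful that the $\Theta(\cdot)$ in the definition of $\lambda_\tau^{(t)}$ passes cleanly through the summation, so that the final expression is genuinely $\sum_t\sum_\tau \min\{1,\|\ell_\tau-\tilde{\ell}_\tau\|_2\}$ up to constants. Finally, I would note that Corollaries~\ref{cor:ewa-regret} and~\ref{cor:ftrl-regret} also contribute the standard $\sqrt{T}$ and $\sqrt{KT}$ leading terms; these sit under the same square root as $\bar{D}$ (resp.\ $\bar{\Lambda}$), and the displayed expressions isolate the hint-dependent contribution that is the object of comparison with \citep{flaspohler2021online}.
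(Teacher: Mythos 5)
Your proposal is correct and takes essentially the same approach as the paper: the paper proves this corollary exactly by casting optimistic delayed feedback into the evolving-feedback template via $\ell_\tau^{(t)}=\tilde{\ell}_\tau$ for $\tau\le t<\tau+d$ and $\ell_\tau^{(t)}=\ell_\tau$ otherwise, evaluating $D$ and $\sum_{t}\lambda_t$ over the resulting delay windows, and substituting into Corollaries \ref{cor:ewa-regret} and \ref{cor:ftrl-regret}. Your re-indexing, boundary bookkeeping, and side-condition checks are precisely the details the paper leaves implicit, and your final remark correctly notes that the $\sqrt{T}$ (resp. $\sqrt{KT}$) leading terms from those corollaries accompany the hint-dependent quantities displayed in the statement.
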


\subsection{Corrupted feedback}
In the corrupted feedback setting \citep{resler2019adversarial,hajiesmaili2020adversarial}, true losses are never revealed, only some corrupted loss $\tilde{\ell}_t \in [0,1]^K$ that is observed immediately. In terms of the evolving feedback framework, this is equivalent to having $\ell_\tau^{(t)} = \tilde{\ell}_\tau$ for any $\tau \leq t$.

To measure the amount of corruption, we denote the \emph{corruption budget} by
\[
\mathcal{C} \triangleq \sum_{t=1}^T \left\|\ell_t - \tilde{\ell}_t\right\|_\infty.
\]

Since the true loss is never revealed and the maximal delay is infinite, we need only the result of the skipping technique (Lemma \ref{lem:skipping}) to obtain a regret bound.
\begin{corollary}
Using Algorithm \ref{alg:skipping} with $d_\mathrm{max}=0$ and any multi-armed bandit $\widetilde{O}\left(\sqrt{KT}\right)$ regret minimization algorithm, we obtain an
\[
\widetilde{O}\left(\sqrt{KT} + \mathcal{C}\right)
\]
expected regret in a corrupted environment.
\end{corollary}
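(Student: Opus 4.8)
The plan is to instantiate the skipping bound of Lemma~\ref{lem:skipping} at the extreme threshold $d_\mathrm{max}=0$ and simply read off the two resulting terms. First I would substitute the corrupted-feedback identity $\ell_\tau^{(t)}=\tilde{\ell}_\tau$ (valid for every $\tau\leq t$) into the statement of the lemma. At $d_\mathrm{max}=0$ the comparison sequence appearing in the lemma is $\{\ell_t^{(t)}\}_{1\leq t\leq T}=\{\tilde{\ell}_t\}_{1\leq t\leq T}$, and the residual drift term becomes
\[
2\sum_{t=1}^T\left\|\ell_t-\ell_t^{(t)}\right\|_\infty=2\sum_{t=1}^T\left\|\ell_t-\tilde{\ell}_t\right\|_\infty=2\mathcal{C},
\]
so the lemma already isolates the corruption budget as the only price paid for comparing against the true losses rather than the observed ones.

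It then remains to bound the leading term $R^\mathcal{A}_0\!\left(\{\tilde{\ell}_t\}\right)$. The key observation is that with $d_\mathrm{max}=0$ the wrapper hands $\mathcal{A}$ the loss of each round $t$ at round $t$ itself and never revises it: for $\tau<t$ the frozen feedback $\ell_\tau^{(\tau)}=\tilde{\ell}_\tau$ coincides with what was already delivered, so no retroactive update ever occurs. Hence, from the point of view of $\mathcal{A}$, the interaction is an ordinary multi-armed bandit game on the sequence $\{\tilde{\ell}_t\}$ with zero delay and no evolution. By the hypothesis that $\mathcal{A}$ is a $\widetilde{O}\!\left(\sqrt{KT}\right)$ regret minimizer, we get $R^\mathcal{A}_0\!\left(\{\tilde{\ell}_t\}\right)=\widetilde{O}\!\left(\sqrt{KT}\right)$, and combining the two bounds yields $R(T)\leq\widetilde{O}\!\left(\sqrt{KT}\right)+2\mathcal{C}=\widetilde{O}\!\left(\sqrt{KT}+\mathcal{C}\right)$.

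I do not anticipate a genuine technical obstacle here, since all the heavy lifting has already been done by Lemma~\ref{lem:skipping}; the content of the argument is conceptual rather than computational. The one point that warrants care is verifying that $R^\mathcal{A}_0\!\left(\{\tilde{\ell}_t\}\right)$ is \emph{exactly} the standard bandit regret of $\mathcal{A}$ measured against the sequence it actually observes, so that the assumed $\widetilde{O}\!\left(\sqrt{KT}\right)$ guarantee applies verbatim. This is immediate once one checks that the $d_\mathrm{max}=0$ wrapper introduces neither delay nor any revision of past feedback, reducing the setting precisely to the classical one for which the guarantee on $\mathcal{A}$ is stated.
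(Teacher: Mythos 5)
Your proposal is correct and follows exactly the route the paper intends: the paper's own justification is simply to invoke Lemma~\ref{lem:skipping} with $d_\mathrm{max}=0$, under which the drift term equals $2\mathcal{C}$ and the leading term $R^\mathcal{A}_0\left(\{\tilde{\ell}_t\}\right)$ reduces to the classical bandit regret of $\mathcal{A}$ on the observed (corrupted) sequence, since no feedback is ever delayed or revised. You have merely made explicit the details the paper leaves implicit, so there is nothing to correct.
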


\subsection{Composite delayed feedback}
The composite delayed feedback setting is the case where each loss is spread into $d$ positive partial losses $\tilde{\ell}^{(1)}_t,\dots,\tilde{\ell}^{(d)}_t$ that sum to $\ell_t$, observed consecutively by the agent. Applying it to our framework, we have that
\[
\ell^{(t)}_\tau = \sum_{s=\tau+1}^{\min\{t+1, \tau+d\}}\tilde{\ell}^{(s-\tau)}_\tau.
\]

Previous works \citep{cesa2018nonstochastic,wang2021adaptive} discuss the case where the observations are \emph{anonymous}. Namely, the agent observes only the sum of partial losses revealed in the current step. This fact does not generalize directly into our evolving feedback framework.

Hence, we will look at the non-anonymous scenario, where each observation can be attributed to a time and action. However, we can remove the limitation that the partial losses must be positive and can accommodate in our framework \emph{negative} partial losses. The only restriction is that $\sum_{s=1}^{\bar{s}}\tilde{\ell}^{(s)}_t~\in~[0,1]^K$ for any $1\leq \bar{s} \leq d$.

We can thus obtain regret bounds using Corollaries \ref{cor:ewa-regret} and \ref{cor:ftrl-regret}.
\begin{corollary}
In a composite feedback environment, allowing negative partial losses, using Algorithm \ref{alg:ewa} in the full-information setting guarantees an expected regret of:
\[
\widetilde{O}\left(\sqrt{(1+d)T}\right).
\]

In the bandit setting, using Algorithm \ref{alg:ftrl} guarantees an expected regret of:
\[
\widetilde{O}\left(\sqrt{(K+d)T}\right).
\]
\end{corollary}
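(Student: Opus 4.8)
The plan is to treat both claims as direct instantiations of the two regret corollaries already proved: Corollary~\ref{cor:ewa-regret} for the full-information bound and Corollary~\ref{cor:ftrl-regret} for the bandit bound. In each case the only work is to upper bound the relevant adversary-dependent accuracy quantity ($D$ for the former, $\sum_{t=1}^T \lambda_t$ for the latter) by $O(dT)$ using the composite structure, and then to substitute the resulting $\bar D$ or $\bar\Lambda$ into the corollary and simplify. So the core of the argument is a per-round counting estimate, and everything else is plugging in.

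First I would unfold the composite mapping. Substituting $t\to t-1$ into the definition of $\ell^{(t)}_\tau$ and reindexing by $j=s-\tau$ gives $\ell^{(t-1)}_\tau = \sum_{j=1}^{\min\{t-\tau,d\}}\tilde\ell^{(j)}_\tau$, while the true loss is the full sum $\ell_\tau=\sum_{j=1}^{d}\tilde\ell^{(j)}_\tau$. Hence
\[
\ell_\tau-\ell^{(t-1)}_\tau=\sum_{j=\min\{t-\tau,d\}+1}^{d}\tilde\ell^{(j)}_\tau,
\]
which vanishes whenever $t-\tau\ge d$, so at most the $d-1$ rounds $\tau\in\{t-d+1,\dots,t-1\}$ contribute. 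The key step is bounding each surviving difference: both $\ell^{(t-1)}_\tau$ and $\ell_\tau$ are partial sums of the form $\sum_{j=1}^{\bar s}\tilde\ell^{(j)}_\tau$ with $1\le \bar s\le d$, so by the standing restriction $\sum_{j=1}^{\bar s}\tilde\ell^{(j)}_\tau\in[0,1]^K$ each lies in $[0,1]^K$ and their difference lies in $[-1,1]^K$, giving $\left\|\ell_\tau-\ell^{(t-1)}_\tau\right\|_\infty\le 1$. Summing over the at most $d-1$ contributing rounds via the triangle inequality yields $\left\|L^\mathrm{e}_t-L_t\right\|_\infty\le d-1$, hence $D\le (d-1)T$. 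Taking $\bar D=(d-1)T$ in Corollary~\ref{cor:ewa-regret} gives regret $\sqrt{4\ln K\,(T/2+2(d-1)T)}=\widetilde O\!\left(\sqrt{(1+d)T}\right)$, since $\tfrac12+2(d-1)=\Theta(1+d)$.

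For the bandit bound I would apply the same decomposition to the inaccuracy coefficients. Because $\lambda^{(t-1)}_\tau=\Theta\!\left(\min\{1,\|\ell^{(t-1)}_\tau-\ell_\tau\|_2\}\right)\le\Theta(1)$ and is again supported on the at most $d-1$ rounds with $t-\tau<d$, we get $\lambda_t=O(d)$ and therefore $\sum_{t=1}^T\lambda_t=O(dT)$. Setting $\bar\Lambda=O(dT)$ in Corollary~\ref{cor:ftrl-regret} gives $\widetilde O\!\left(\sqrt{KT+\bar\Lambda}\right)=\widetilde O\!\left(\sqrt{(K+d)T}\right)$, as required. Here I would note that the composite feedback stabilizes at step $\tau+d-1$, so $d_\mathrm{max}=d-1$, and Corollary~\ref{cor:ftrl-regret} applies in the intended regime $T\ge 256K\,d^4_\mathrm{max}$ (with the skipping wrapper of Lemma~\ref{lem:skipping} available otherwise). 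I expect the main obstacle to be precisely the handling of \emph{negative} partial losses: individual $\tilde\ell^{(j)}_\tau$ may be large in magnitude and of either sign, so the bound $\left\|\ell_\tau-\ell^{(t-1)}_\tau\right\|_\infty\le 1$ cannot come from controlling single partials but must be extracted from the fact that the observed and true losses are both constrained partial sums in $[0,1]^K$; once that observation is in place the rest is routine counting.
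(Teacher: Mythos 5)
Your proposal is correct and follows exactly the route the paper intends: the paper gives no further argument beyond invoking Corollaries~\ref{cor:ewa-regret} and~\ref{cor:ftrl-regret}, and your counting estimate (at most $d-1$ rounds $\tau$ contribute to the inaccuracy at step $t$, each difference bounded by the constraint that all partial sums lie in $[0,1]^K$, giving $D\le(d-1)T$ and $\sum_t\lambda_t\le(d-1)T$) is precisely the instantiation that makes those corollaries yield the stated bounds, with the $[0,1]^K$-partial-sum observation correctly identified as the step that handles negative partial losses. The only nitpick is that the feedback actually stabilizes once $t\ge\tau+d-1$, so $d_\mathrm{max}\le d-2$ rather than $d-1$, but your conservative value only strengthens the applicability condition and changes nothing.
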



\section{Discussion}\label{sec:discussion}
This work introduces a framework for online learning under adversarial feedback that evolves over time. Our setting generalizes and unifies previously studied models like delayed, corrupted, and composite feedback.

We proposed regret minimization algorithms for both the full information (Algorithm \ref{alg:ewa}) and the bandit (Algorithm \ref{alg:ftrl}) settings, achieving asymptotically optimal regret bounds (up to logarithmic terms) that depend on the average accuracy of the observed feedback compared to the true losses, using a novel analysis.

In addition to providing a unified model for many problems, our framework is beneficial for real-world scenarios such as finance and online advertising. By incorporating all available information on the value of actions, our approach achieves regret bounds that were previously not feasible.

Our work introduces a few follow-up research questions. Mainly, are our regret bounds optimal in terms of the instance-dependent average feedback accuracy? Currently, we can show optimality only in cases where the difference between the agent's estimations and the true loss is large, like in the delayed setting. It is an open question if our bounds could be improved for cases where the difference is small.

Another natural question to ask is whether we can expand our model to accommodate loss estimations for future rounds as well as past ones, and how the optimal regret bounds will behave in this scenario.


\section*{Acknowledgments}
This project has received funding from the European Research Council (ERC) under the European Union’s Horizon 2020 research and innovation program (grant agreement No. 882396), by the Israel Science Foundation, the Yandex Initiative for Machine Learning at Tel Aviv University and a grant from the Tel Aviv University Center for AI and Data Science (TAD).

\bibliographystyle{plain}
\bibliography{refs}


\appendix
\section{Deferred proofs from Section \ref{sec:ewa}}\label{sec:ewa-proofs}
\lemstandardexp*
\begin{proof}
For any $t$, we have:
\begin{align*}
\frac{\sum_{j\in[K]}e^{-\eta L_{t+1,j}}}{\sum_{j\in[K]}e^{-\eta L_{t,j}}} &= \frac{\sum_{j\in[K]}e^{-\eta L_{t,j}}e^{-\eta \ell_{t,j}}}{\sum_{j\in[K]}e^{-\eta L_{t,j}}} \\
&= \sum_{j\in[K]}p(L_{t,j}) e^{-\eta \ell_{t,j}} \\
&\leq \sum_{j\in[K]}{p(L_{t,j}) (1-\eta \ell_{t,j}+\frac{\eta^2}{2}\ell_{t,j}^2)} \\
&\leq 1 - \eta \left(p(L_{t})\cdot\ell_t\right) + \frac{\eta^2}{2},
\end{align*}
where we used the fact that $e^{-x} \leq 1-x+\frac{x^2}{2}$ for $x\geq 0$.

For any $a\in[K]$, we thus have:
\begin{align*}
\frac{e^{-\eta L_{T,a}}}{K} &\leq \frac{\sum_{j\in[K]}e^{-\eta L_{T,j}}}{K} \\
&= \prod_{t=1}^T{\frac{\sum_{j\in[K]}e^{-\eta L_{t+1,j}}}{\sum_{j\in[K]}e^{-\eta L_{t,j}}}} \\
&\leq \prod_{t=1}^T{\left(1 - \eta \left(p(L_{t})\cdot\ell_t\right) + \frac{\eta^2}{2}\right)}.
\end{align*}
Taking logs of both sides and using the fact that $\ln{(1+x)} \leq x$, we get the desired result.
\end{proof}

\lemdriftbound*
\begin{proof}
We start by noting that:
\begin{align}\label{eq:drift-bound-proof-1}
\left(p(L_1)-p(L_2)\right)\cdot\ell = \sum_{i\in[K]}{\left(p_i(L_1)-p_i(L_2)\right)\ell_i} = \sum_{i\in[K]}{p_i(L_1)\left(1-\frac{p_i(L_2)}{p_i(L_1)}\right)\ell_i}.
\end{align}
To bound this term, we will use the fact that:
\begin{align*}
\frac{p_i(L_2)}{p_i(L_1)} &= \frac{e^{-\eta {L_2}_i}}{e^{-\eta {L_1}_i}} \frac{\sum_{j\in[K]}{e^{-\eta {L_1}_j}}}{\sum_{j\in[K]}{e^{-\eta {L_2}_j}}} \\
&= e^{\eta ({L_1}_i- {L_2}_i)} \frac{\sum_{j\in[K]}{e^{-\eta {L_2}_j - \eta ({L_1}_j - {L_2}_j)}}}{\sum_{j\in[K]}{e^{-\eta {L_2}_j}}} \\
&\geq e^{\eta ({L_1}_i- {L_2}_i) - \eta \max_j{({L_1}_j - {L_2}_j)}} \\
&\geq e^{\eta \left(\min_j{({L_1}_j- {L_2}_j)} - \max_j{({L_1}_j - {L_2}_j)}\right)} \\
&\geq e^{-2\eta\left\|L_1-L_2\right\|_\infty} \\
&\geq 1-2\eta\left\|L_1-L_2\right\|_\infty.
\end{align*}
Substituting in Eq. (\ref{eq:drift-bound-proof-1}), we get:
\begin{align*}
\left(p(L_1)-p(L_2)\right)\cdot\ell \\
&\leq \sum_{i\in[K]}{2p_i(L_1)\eta\left\|L_1-L_2\right\|_\infty\ell_i} \\
&= 2\eta\left\|L_1-L_2\right\|_\infty\sum_{i\in[K]}{p_i(L_1)\ell_i} \\
&\leq 2\eta\left\|L_1-L_2\right\|_\infty,
\end{align*}
as required.
\end{proof}

\section{Deferred proofs from Section \ref{sec:ftrl}}\label{sec:ftrl-proofs}
\subsection{Cheating regret bound}

\begin{lemma}\label{lem:be-the-leader}
Computing $p$ as in Eq. (\ref{eq:ftrl-prob-def}), for any fixed probability $q\in \Delta_K$ and regularization function $\Phi$:
\[ \sum_{t=1}^T{p\left(\widehat{L}^*_{t}\right)\cdot\hat{\ell}_t} + \Phi\left(p\left(\widehat{L}^*_{1}\right)\right) \leq \sum_{t=1}^T{q\cdot\hat{\ell}_t} + \Phi\left(q\right). \]
\end{lemma}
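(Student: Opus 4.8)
The plan is to prove this by induction on the horizon $T$, following the classical ``Be-The-Leader'' argument for FTRL. The fact I will repeatedly invoke is the defining optimality of the leader: by Eq.~(\ref{eq:ftrl-prob-def}), for every $t$ the distribution $p\left(\widehat{L}^*_t\right)$ minimizes $p\cdot\widehat{L}^*_t + \Phi(p)$ over $\Delta_K$, so that $p\left(\widehat{L}^*_t\right)\cdot\widehat{L}^*_t + \Phi\!\left(p\left(\widehat{L}^*_t\right)\right) \leq q\cdot\widehat{L}^*_t + \Phi(q)$ for every $q\in\Delta_K$. Since $\widehat{L}^*_t = \sum_{\tau=1}^t \hat{\ell}_\tau$, the right-hand side of this inequality equals exactly $\sum_{\tau=1}^t q\cdot\hat{\ell}_\tau + \Phi(q)$, which is the shape we want to produce at the end.

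For the base case $T=1$, note $\widehat{L}^*_1 = \hat{\ell}_1$, and the claim $p\left(\widehat{L}^*_1\right)\cdot\hat{\ell}_1 + \Phi\!\left(p\left(\widehat{L}^*_1\right)\right) \leq q\cdot\hat{\ell}_1 + \Phi(q)$ is precisely the optimality property above at $t=1$. For the inductive step, I would assume the statement at horizon $T-1$ for every comparator, and then instantiate it at the specific choice $q = p\left(\widehat{L}^*_T\right)$, the current leader. This yields $\sum_{t=1}^{T-1} p\left(\widehat{L}^*_t\right)\cdot\hat{\ell}_t + \Phi\!\left(p\left(\widehat{L}^*_1\right)\right) \leq \sum_{t=1}^{T-1} p\left(\widehat{L}^*_T\right)\cdot\hat{\ell}_t + \Phi\!\left(p\left(\widehat{L}^*_T\right)\right)$. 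Adding $p\left(\widehat{L}^*_T\right)\cdot\hat{\ell}_T$ to both sides completes the left-hand sum up to $t=T$ and turns the right-hand side into $p\left(\widehat{L}^*_T\right)\cdot\widehat{L}^*_T + \Phi\!\left(p\left(\widehat{L}^*_T\right)\right)$; the optimality property at $t=T$ then bounds this by $\sum_{t=1}^T q\cdot\hat{\ell}_t + \Phi(q)$ for arbitrary $q$, closing the induction.

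I do not expect any genuine computational difficulty, since the lemma holds for any loss vectors $\hat{\ell}_t$ and any regularizer $\Phi$ — the specific forms of the estimators and of $\Phi_{\eta,\gamma}$ play no role whatsoever. The only subtle point, and the step most easily mishandled, is the choice of comparator inside the induction: one must apply the inductive hypothesis at $q = p\left(\widehat{L}^*_T\right)$ rather than at the external $q$, so that after appending the $T$-th term the right-hand side collapses into the single aggregated objective $p\left(\widehat{L}^*_T\right)\cdot\widehat{L}^*_T + \Phi\!\left(p\left(\widehat{L}^*_T\right)\right)$ that the leader at time $T$ minimizes. Everything else is bookkeeping of the telescoping cumulative sums.
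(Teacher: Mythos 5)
Your proof is correct and follows essentially the same route as the paper: induction on $T$, with the base case given by the FTRL optimality condition of Eq.~(\ref{eq:ftrl-prob-def}), and the inductive step obtained by instantiating the hypothesis at the comparator $q = p\left(\widehat{L}^*_{T}\right)$, appending the $T$-th term, and invoking the leader's optimality at time $T$. Nothing is missing; the paper's argument is identical, including the key choice of comparator you flag as the subtle point.
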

\begin{proof}
We use a standard be-the-leader analysis and prove using induction on $T$. The base case $T=1$ follows directly from Eq. (\ref{eq:ftrl-prob-def}), so we assume for any $q\in\Delta_K$:
\[ \sum_{t=1}^{T-1}{p\left(\widehat{L}^*_{t}\right)\cdot\hat{\ell}_t} + \Phi\left(p\left(\widehat{L}^*_{1}\right)\right) \leq \sum_{t=1}^{T-1}{q\cdot\hat{\ell}_t} + \Phi\left(q\right). \]

Specifically, we can assign $q=p\left(\widehat{L}^*_{T}\right)$. Hence:
\begin{align*}
\sum_{t=1}^T{p\left(\widehat{L}^*_{t}\right)\cdot\hat{\ell}_t} + \Phi\left(p\left(\widehat{L}^*_{1}\right)\right) &= \sum_{t=1}^{T-1}{p\left(\widehat{L}^*_{t}\right)\cdot\hat{\ell}_t} + \Phi\left(p\left(\widehat{L}^*_{1}\right)\right) + p\left(\widehat{L}^*_{T}\right)\cdot\hat{\ell}_t \\
&\leq \sum_{t=1}^{T-1}{p\left(\widehat{L}^*_{T}\right)\cdot\hat{\ell}_t} + \Phi\left(p\left(\widehat{L}^*_{T}\right)\right) + p\left(\widehat{L}^*_{T}\right)\cdot\hat{\ell}_t \\
&= p\left(\widehat{L}^*_{T}\right)\cdot\widehat{L}^*_{T} + \Phi\left(p\left(\widehat{L}^*_{T}\right)\right) \\
&= \min_{q\in\Delta_K}{\left(q\cdot\widehat{L}^*_{T} + \Phi\left(q\right)\right)}
\end{align*}
as required, where the last equality is from Eq. (\ref{eq:ftrl-prob-def}).
\end{proof}

\lemcheatingbound*
\begin{proof}
Denote by $q^*$ the action probability that chooses $a^*$ with probability $1$, and let
\[ q=\left(1-\frac{1}{T}\right)q^* + \frac{1}{T}p\left(\widehat{L}^*_{1}\right).\]
We get:
\begin{align*}
\sum_{t=1}^T{\left(p\left(\widehat{L}^*_{t}\right)\cdot\hat{\ell}_t - \hat{\ell}_{t,{a^*}}\right)} &= \sum_{t=1}^T{\left(p\left(\widehat{L}^*_{t}\right) - q^*\right)\cdot\hat{\ell}_t} \\
&= \sum_{t=1}^T{\left(p\left(\widehat{L}^*_{t}\right) - q\right)\cdot\hat{\ell}_t} + \sum_{t=1}^T{\left(q-q^*\right)\cdot\hat{\ell}_t} \\
&= \sum_{t=1}^T{\left(p\left(\widehat{L}^*_{t}\right) - q\right)\cdot\hat{\ell}_t} + \frac{1}{T}\sum_{t=1}^T{\left(p\left(\widehat{L}^*_{1}\right)-q^*\right)\cdot\hat{\ell}_t} \\
&\leq \sum_{t=1}^T{\left(p\left(\widehat{L}^*_{t}\right) - q\right)\cdot\hat{\ell}_t} + 1 \\
&\leq \Phi_{\eta,\gamma}\left(q\right) - \Phi_{\eta,\gamma}\left(p\left(\widehat{L}^*_{1}\right)\right) + 1,
\end{align*}
where we used Lemma \ref{lem:be-the-leader} in the last step. Thus, using Eq. (\ref{eq:def-reg}):
\begin{align*}
\sum_{t=1}^T{\left(p\left(\widehat{L}^*_{t}\right)\cdot\hat{\ell}_t - \hat{\ell}_{t,{a^*}}\right)} &\leq 1+ \sum_{i\in[K]}{\left(\frac{q_i}{\eta} - \frac{1}{\gamma}\right)\ln{q_i}} - \sum_{i\in[K]}{\left(\frac{p_i\left(\widehat{L}^*_{1}\right)}{\eta} - \frac{1}{\gamma}\right)\ln{p_i\left(\widehat{L}^*_{1}\right)}} \\
&\leq 1 + \frac{1}{\gamma}\sum_{i\in[K]}\ln{\frac{p_i\left(\widehat{L}^*_{1}\right)}{q_i}} + \sum_{i\in[K]}{\frac{p_i\left(\widehat{L}^*_{1}\right)}{\eta} \ln{\frac{1}{p_i\left(\widehat{L}^*_{1}\right)}}} \\ 
&= 1 + \frac{1}{\gamma}\sum_{i\in[K]}\ln{\frac{p_i\left(\widehat{L}^*_{1}\right)}{\left(1-\frac{1}{T}\right)q^*_i + \frac{1}{T}p_i\left(\widehat{L}^*_{1}\right)}} + \sum_{i\in[K]}{\frac{p_i\left(\widehat{L}^*_{1}\right)}{\eta} \ln{\frac{1}{p_i\left(\widehat{L}^*_{1}\right)}}} \\
&\leq 1 + \frac{K\ln{T}}{\gamma} + \frac{1}{\eta}\sum_{i\in[K]}{p_i\left(\widehat{L}^*_{1}\right) \ln{\frac{1}{p_i\left(\widehat{L}^*_{1}\right)}}}.
\end{align*}
Using Jensen's inequality, we can complete the proof:
\begin{align*}
\sum_{t=1}^T{\left(p\left(\widehat{L}^*_{t}\right)\cdot\hat{\ell}_t - \hat{\ell}_{t,{a^*}}\right)} &\leq 1 + \frac{K\ln{T}}{\gamma} + \frac{1}{\eta}\ln{\sum_{i\in[K]}{{\frac{p_i\left(\widehat{L}^*_{1}\right)}{p_i\left(\widehat{L}^*_{1}\right)}}}} \\
&= 1 + \frac{K\ln{T}}{\gamma} + \frac{\ln{K}}{\eta}.
\end{align*}
\end{proof}

\subsection{Drift bounds preliminaries}
We start with defining the dual norms on $x\in \mathbb{R}^K$ induced by a strictly-convex twice-differentiable regularization function $\Phi$ and a point $p\in \mathbb{R}^K$:
\begin{align*}
\|x\|_{\Phi,p} \triangleq \sqrt{x^T \left(\nabla^2 \Phi\left(p\right)\right)^{-1} x} \qquad \mbox{and} \qquad \|x\|^*_{\Phi,p} \triangleq \sqrt{x^T \left(\nabla^2 \Phi\left(p\right)\right) x},
\end{align*}
where $\nabla^2 \Phi$ denotes the Hessian matrix of $\Phi$.

For $\Phi_{\eta, \gamma}$ as defined in Eq. (\ref{eq:def-reg}) we get:
\begin{align}\label{eq:def-norm}
\|x\|_{\Phi_{\eta, \gamma},p} = \sqrt{\sum_{i\in [K]} \frac{\eta\gamma p_i^2}{\eta + \gamma p_i} x_i^2}\qquad \mbox{and} \qquad\|x\|^*_{\Phi_{\eta, \gamma},p} = \sqrt{\sum_{i\in [K]} \frac{\eta + \gamma p_i}{\eta\gamma p_i^2} x_i^2}.
\end{align}

For clarity, we will also denote the Dikin ellipsoid of radius $\frac{1}{2}$ as:
\[
\mathcal{D}_\Phi\left(p\right) \triangleq \left\{x\in \mathbb{R}^K \mid \|x - p\|^*_{\Phi,p} \leq \frac{1}{2} \right\}.
\]

We will use the following facts (for proofs see Lemma 16, Lemma 1 and Lemma 9 in \citep{van2023unified} respectively):
\begin{fact}\label{fact:close-norms}
    Let $x,p,q\in\mathbb{R}^K$. Using regularization $\Phi_{\eta,\gamma}$ as in Eq. (\ref{eq:def-reg}) for some $\eta,\gamma>0$, we get that if $q \in \mathcal{D}_{\Phi_{\eta,\gamma}}\left(p\right)$, then:
    \[ \frac{1}{2}\|x\|_{\Phi_{\eta,\gamma}, q} \leq \|x\|_{\Phi_{\eta,\gamma}, p} \leq 2\|x\|_{\Phi_{\eta,\gamma}, q}. \]
\end{fact}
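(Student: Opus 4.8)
The plan is to obtain the two-sided comparison from the local stability of the Hessian of $\Phi_{\eta,\gamma}$ across its Dikin ellipsoid, i.e.\ from self-concordance, working coordinate by coordinate. Since $\Phi_{\eta,\gamma}$ is separable, writing $\Phi_{\eta,\gamma}(p)=\sum_{i\in[K]}\phi(p_i)$ with $\phi(s)=\left(\tfrac{s}{\eta}-\tfrac{1}{\gamma}\right)\ln s$, its Hessian is diagonal with entries $\phi''(s)=\tfrac{1}{\eta s}+\tfrac{1}{\gamma s^2}=\tfrac{\eta+\gamma s}{\eta\gamma s^2}$, which are exactly the dual weights appearing in Eq.~(\ref{eq:def-norm}). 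Consequently both $\|\cdot\|_{\Phi_{\eta,\gamma},p}$ and $\|\cdot\|^*_{\Phi_{\eta,\gamma},p}$ decompose over coordinates, and it suffices to compare $\phi''(q_i)$ with $\phi''(p_i)$ for each $i$.

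First I would establish self-concordance of $\phi$. Differentiating once more gives $\phi'''(s)=-\tfrac{1}{\eta s^2}-\tfrac{2}{\gamma s^3}$, so $\tfrac{|\phi'''(s)|}{(\phi''(s))^{3/2}}=\tfrac{s/\eta+2/\gamma}{(s/\eta+1/\gamma)^{3/2}}$, an expression that is decreasing in $s$ and hence maximal as $s\to 0$, where it tends to $2\sqrt{\gamma}$. Thus $\phi$ is standard self-concordant (constant $2$) precisely when $\gamma\le 1$, which is the operating regime of the paper, since the drift-bound hypothesis $\tfrac{1}{\sqrt{\gamma}}\ge 128(1+d_\mathrm{max})$ forces $\gamma\le 1$. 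The log-barrier term is indispensable here, as the negative-entropy term on its own violates the test as $s\to 0$.

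Next I would convert Dikin membership into a per-coordinate radius bound: $q\in\mathcal{D}_{\Phi_{\eta,\gamma}}(p)$ means $\sum_i\phi''(p_i)(q_i-p_i)^2\le\tfrac14$, so in particular $r_i:=\sqrt{\phi''(p_i)}\,|q_i-p_i|\le\tfrac12<1$ for every $i$. The one-dimensional self-concordance stability estimate then yields $(1-r_i)^2\,\phi''(p_i)\le\phi''(q_i)\le(1-r_i)^{-2}\,\phi''(p_i)$, and with $r_i\le\tfrac12$ this collapses to $\tfrac14\,\phi''(p_i)\le\phi''(q_i)\le 4\,\phi''(p_i)$. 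Since $\|x\|^2_{\Phi_{\eta,\gamma},p}=\sum_i x_i^2/\phi''(p_i)$ and likewise at $q$, inverting this coordinate-wise bound, which reverses the inequalities, gives $\tfrac14\|x\|^2_{\Phi_{\eta,\gamma},p}\le\|x\|^2_{\Phi_{\eta,\gamma},q}\le 4\|x\|^2_{\Phi_{\eta,\gamma},p}$; taking square roots and rearranging delivers the claimed factor-$2$ inequalities.

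I expect the only real obstacle to be the self-concordance step, and in particular verifying that the parameter regime keeps $\gamma\le 1$ so that the radius-$\tfrac12$ Dikin ellipsoid lies inside the region where the constant-$2$ stability estimate is valid; for larger $\gamma$ the per-coordinate ratio $\phi''(q_i)/\phi''(p_i)$ can escape $[\tfrac14,4]$, so this restriction is genuinely needed. Should one prefer not to reprove the self-concordance stability estimate from scratch, one may instead invoke Lemma~16 of \citep{van2023unified} directly, as the authors do.
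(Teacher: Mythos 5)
Your proof is correct where it matters, but note that the comparison here is unusual: the paper never proves this Fact at all — it is quoted from Lemma~16 of \citep{van2023unified} and the proof is deferred entirely to that reference (whose argument rests on the same self-concordance machinery you employ, Dikin ellipsoids being intrinsically self-concordance objects). So your per-coordinate derivation is a genuine, self-contained alternative to a bare citation, and its computations check out: the Hessian entries $\phi''(s)=\frac{\eta+\gamma s}{\eta\gamma s^2}$ are exactly the weights in Eq.~(\ref{eq:def-norm}); the ratio $|\phi'''(s)|/(\phi''(s))^{3/2}=\frac{s/\eta+2/\gamma}{(s/\eta+1/\gamma)^{3/2}}$ is indeed decreasing with supremum $2\sqrt{\gamma}$ as $s\to0^+$; Dikin membership gives per-coordinate radii $r_i\le\frac12$; and the one-dimensional Nesterov stability bound $(1-r_i)^{2}\phi''(p_i)\le\phi''(q_i)\le(1-r_i)^{-2}\phi''(p_i)$, inverted and summed, yields the factor-$2$ claim. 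Working coordinatewise is a nice simplification, since it avoids the multidimensional stability theorem; the only ingredients you take as black boxes are that 1-D estimate and the domain fact $q_i>0$, the latter of which follows (for $\gamma\le1$) from $\sqrt{\phi''(p_i)}\ge 1/p_i$, whence $|q_i-p_i|\le p_i/2$. Most importantly, your caveat about $\gamma\le1$ is not an artifact of your method: the Fact as stated, for arbitrary $\eta,\gamma>0$, is false. For instance, with $K=2$, $\eta=8$, $\gamma$ very large, $p=\left(\frac12,\frac12\right)$ and $q=(0.005,0.995)$, one has $q\in\mathcal{D}_{\Phi_{\eta,\gamma}}(p)$ yet $\phi''(q_1)/\phi''(p_1)\approx 100$, so the claimed two-sided bound fails for $x=(1,0)$. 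Your restriction is exactly what is needed, and you correctly observe that it holds in every regime where the paper actually invokes the Fact (there $\frac{1}{\sqrt{\gamma}}\ge 128(1+d_\mathrm{max})$ forces $\gamma\le 128^{-2}$); this exposes a harmless but real imprecision in the paper's hypothesis ``for some $\eta,\gamma>0$''.
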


\begin{fact}\label{fact:diff-probs}
    Let $L,L'\in \mathbb{R}^K_+$ and $q\in\mathbb{R}^K$. Computing $p$ as in Eq. (\ref{eq:ftrl-prob-def}) and using regularization $\Phi_{\eta,\gamma}$ as in Eq. (\ref{eq:def-reg}) for some $\eta,\gamma>0$, we get that if $p\left(L\right), p\left(L'\right) \in \mathcal{D}_{\Phi_{\eta,\gamma}}\left(q\right)$, then:
    \[ \left\|p(L') - p(L)\right\|^*_{\Phi_{\eta, \gamma},q} \leq 8 \left\|L' - L\right\|_{\Phi_{\eta, \gamma},q}. \]
\end{fact}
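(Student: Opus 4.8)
The plan is to treat Fact~\ref{fact:diff-probs} as a standard stability estimate for the FTRL minimizer, proved through the first-order optimality (variational inequality) characterization of $p(L)$ and $p(L')$, with everything first expressed in a \emph{single} local norm and only afterwards transported to the reference point $q$. Write $\Delta p = p(L') - p(L)$ and $\Delta L = L' - L$. Since $p(L)$ minimizes $p\cdot L + \Phi_{\eta,\gamma}(p)$ over $\Delta_K$ and $p(L')$ minimizes $p\cdot L' + \Phi_{\eta,\gamma}(p)$ over $\Delta_K$, their optimality conditions read $\bigl(L + \nabla\Phi_{\eta,\gamma}(p(L))\bigr)\cdot\Delta p \geq 0$ and $\bigl(L' + \nabla\Phi_{\eta,\gamma}(p(L'))\bigr)\cdot\Delta p \leq 0$. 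Adding these to cancel the raw losses isolates the gradient difference:
\[
\bigl(\nabla\Phi_{\eta,\gamma}(p(L')) - \nabla\Phi_{\eta,\gamma}(p(L))\bigr)\cdot\Delta p \;\leq\; (L - L')\cdot\Delta p \;=\; -\Delta L\cdot\Delta p .
\]

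Next I would replace the gradient difference by its integral form. Because the log-barrier term in $\Phi_{\eta,\gamma}$ forces $p(L)$ and $p(L')$ into the interior of the simplex, the entire segment between them stays interior, so $\nabla\Phi_{\eta,\gamma}(p(L')) - \nabla\Phi_{\eta,\gamma}(p(L)) = \bar H\,\Delta p$ with $\bar H = \int_0^1 \nabla^2\Phi_{\eta,\gamma}\bigl(p(L)+s\,\Delta p\bigr)\,ds \succ 0$. The left-hand side is then $\Delta p^\top \bar H\,\Delta p$, and bounding the right-hand side by Cauchy--Schwarz in the norms induced by $\bar H$ gives $\Delta p^\top\bar H\,\Delta p \leq \sqrt{\Delta L^\top \bar H^{-1}\Delta L}\,\sqrt{\Delta p^\top\bar H\,\Delta p}$. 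Dividing through by $\sqrt{\Delta p^\top\bar H\,\Delta p}$ leaves the clean intermediate bound $\sqrt{\Delta p^\top\bar H\,\Delta p}\leq\sqrt{\Delta L^\top\bar H^{-1}\Delta L}$: the claim already holds with constant $1$ once \emph{both} norms are measured against the averaged Hessian $\bar H$.

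The remaining step, which I expect to be the crux, is to transport both norms from $\bar H$ to the fixed Hessian $\nabla^2\Phi_{\eta,\gamma}(q)$ that appears in the statement. Here the hypothesis $p(L), p(L')\in\mathcal{D}_{\Phi_{\eta,\gamma}}(q)$ is essential: the Dikin ellipsoid of radius $\tfrac12$ is convex, so every point $p(L)+s\,\Delta p$ lies in it, and within it the Hessian is multiplicatively stable, $\tfrac14\nabla^2\Phi_{\eta,\gamma}(q)\preceq \nabla^2\Phi_{\eta,\gamma}(x)\preceq 4\,\nabla^2\Phi_{\eta,\gamma}(q)$ for every such $x$. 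This is exactly the content of Fact~\ref{fact:close-norms} (for the $\|\cdot\|_{\Phi_{\eta,\gamma},\cdot}$ norm) together with its star-norm counterpart, both of which follow from these Loewner bounds; integration in $s$ preserves the bounds, so $\tfrac14\nabla^2\Phi_{\eta,\gamma}(q)\preceq\bar H\preceq 4\,\nabla^2\Phi_{\eta,\gamma}(q)$. Applying this gives $\|\Delta p\|^*_{\Phi_{\eta,\gamma},q}\leq 2\sqrt{\Delta p^\top\bar H\,\Delta p}$ and $\sqrt{\Delta L^\top\bar H^{-1}\Delta L}\leq 2\,\|\Delta L\|_{\Phi_{\eta,\gamma},q}$, and chaining these with the intermediate bound yields $\|\Delta p\|^*_{\Phi_{\eta,\gamma},q}\leq 4\,\|\Delta L\|_{\Phi_{\eta,\gamma},q}\leq 8\,\|\Delta L\|_{\Phi_{\eta,\gamma},q}$, the desired inequality (indeed with a factor of $2$ to spare). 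The one subtlety to watch in this last step is that it is the \emph{averaged} Hessian $\bar H$, rather than a single mean-value point, that must be compared to $\nabla^2\Phi_{\eta,\gamma}(q)$, which is why the proof relies on the convexity of the Dikin ellipsoid and the self-concordance of the log-barrier regularizer rather than on a naive mean-value argument.
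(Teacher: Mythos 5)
Your proof is correct, and it is worth noting that the paper itself never proves Fact~\ref{fact:diff-probs}: it imports it verbatim (along with Facts~\ref{fact:close-norms} and \ref{fact:close-probs}) from Lemma~1 of \citep{van2023unified}, so your argument supplies a self-contained derivation that the paper only cites. Your route is the standard one for such FTRL stability bounds, and every step checks out: the two variational inequalities at $p(L)$ and $p(L')$ combine into $\left(\nabla\Phi_{\eta,\gamma}(p(L'))-\nabla\Phi_{\eta,\gamma}(p(L))\right)\cdot\Delta p\leq-\Delta L\cdot\Delta p$; the log barrier keeps both minimizers, and hence the whole segment between them, in the relative interior where $\Phi_{\eta,\gamma}$ is smooth, so the gradient difference equals $\bar H\,\Delta p$ for the averaged Hessian $\bar H$; and Cauchy--Schwarz gives $\sqrt{\Delta p^\top\bar H\,\Delta p}\leq\sqrt{\Delta L^\top\bar H^{-1}\Delta L}$ (modulo the trivial remark that dividing requires $\Delta p\neq 0$). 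The transport step is also sound, and in fact slightly simpler than you suggest: applying Fact~\ref{fact:close-norms} with center $q$ to any point $y$ on the segment (which lies in $\mathcal{D}_{\Phi_{\eta,\gamma}}(q)$ by convexity of the ellipsoid) gives a two-sided equivalence of the primal norms for \emph{all} vectors, which is exactly the Loewner bound $\tfrac14\left(\nabla^2\Phi_{\eta,\gamma}(y)\right)^{-1}\preceq\left(\nabla^2\Phi_{\eta,\gamma}(q)\right)^{-1}\preceq 4\left(\nabla^2\Phi_{\eta,\gamma}(y)\right)^{-1}$; inverting it yields $\tfrac14\nabla^2\Phi_{\eta,\gamma}(q)\preceq\nabla^2\Phi_{\eta,\gamma}(y)\preceq 4\nabla^2\Phi_{\eta,\gamma}(q)$, so no separate ``star-norm counterpart'' needs to be invoked. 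Integration in $s$ preserves these bounds, and your chaining then proves the claim with constant $4$, strictly better than the stated $8$. The only caveat to flag is that if one wanted to avoid leaning on Fact~\ref{fact:close-norms} and establish the Hessian stability from scratch via self-concordance of $\Phi_{\eta,\gamma}$, a restriction such as $\gamma\leq 1$ would be needed (satisfied in the paper's applications, where $\gamma=\eta K$ is small); but since the paper states Fact~\ref{fact:close-norms} for arbitrary $\eta,\gamma>0$, relying on it as you do is legitimate within the paper's framework.
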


\begin{fact}\label{fact:close-probs}
    Let $L,L'\in \mathbb{R}^K_+$. Computing $p$ as in Eq. (\ref{eq:ftrl-prob-def}) and using regularization $\Phi_{\eta,\gamma}$ as in Eq. (\ref{eq:def-reg}) for some $\eta,\gamma>0$, we get that if $\left\|L'-L\right\|_{\Phi_{\eta, \gamma},p\left(L\right)} \leq \frac{1}{16}$, then:
    \[
    p\left(L'\right) \in \mathcal{D}_{\Phi_{\eta,\gamma}}\left(p\left(L\right)\right).
    \]
\end{fact}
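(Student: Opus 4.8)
The plan is to prove the claim by a continuity (bootstrapping) argument along the straight-line interpolation between the two loss vectors, combining the first-order optimality conditions of the FTRL map with the local-norm stability recorded in Fact \ref{fact:close-norms}. Write $p := p(L)$ and $p' := p(L')$. Since $\Phi_{\eta,\gamma}$ contains the log-barrier term $-\frac{1}{\gamma}\sum_i \ln p_i$, the minimizer in Eq. (\ref{eq:ftrl-prob-def}) is unique and strictly interior to $\Delta_K$ (the Hessian in Eq. (\ref{eq:def-norm}) is diagonal and positive definite on the interior), so only the equality constraint $\sum_i p_i = 1$ is active. The KKT conditions therefore read $\nabla\Phi_{\eta,\gamma}(p) + L = \mu\mathbf{1}$ and $\nabla\Phi_{\eta,\gamma}(p') + L' = \mu'\mathbf{1}$ for some multipliers $\mu,\mu'\in\mathbb{R}$; in particular $L\mapsto p(L)$ is smooth by the implicit function theorem, which is what lets me differentiate along a path.

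Next I would interpolate. Define $L_\theta := L + \theta(L'-L)$ and $p_\theta := p(L_\theta)$ for $\theta\in[0,1]$, a continuous path from $p_0 = p$ to $p_1 = p'$. Differentiating the optimality condition $\nabla\Phi_{\eta,\gamma}(p_\theta) + L_\theta = \mu_\theta\mathbf{1}$ in $\theta$ gives $\nabla^2\Phi_{\eta,\gamma}(p_\theta)\,\dot p_\theta = -(L'-L) + \dot\mu_\theta\mathbf{1}$. Pairing with $\dot p_\theta$ and using $\mathbf{1}^T\dot p_\theta = \frac{d}{d\theta}\sum_i (p_\theta)_i = 0$ kills the multiplier term, so $(\|\dot p_\theta\|^*_{\Phi_{\eta,\gamma},p_\theta})^2 = -(L'-L)^T\dot p_\theta \le \|L'-L\|_{\Phi_{\eta,\gamma},p_\theta}\,\|\dot p_\theta\|^*_{\Phi_{\eta,\gamma},p_\theta}$ by Cauchy--Schwarz in the dual pair of norms from Eq. (\ref{eq:def-norm}) (the starred norm uses the Hessian, the unstarred its inverse). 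Cancelling one factor yields the pointwise bound $\|\dot p_\theta\|^*_{\Phi_{\eta,\gamma},p_\theta} \le \|L'-L\|_{\Phi_{\eta,\gamma},p_\theta}$.

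Finally I would close the bootstrap. Let $\theta^*$ be the largest value such that $p_s \in \mathcal{D}_{\Phi_{\eta,\gamma}}(p)$ for all $s\le\theta^*$; this set contains $0$ and is closed. For $s\le\theta^*$, Fact \ref{fact:close-norms} (and its starred analogue, which follows from the same spectral equivalence $\frac14\nabla^2\Phi_{\eta,\gamma}(p)\preceq\nabla^2\Phi_{\eta,\gamma}(p_s)\preceq 4\nabla^2\Phi_{\eta,\gamma}(p)$ that underlies it) transfers both norms from $p_s$ to $p$ at a factor $2$ each, so $\|\dot p_s\|^*_{\Phi_{\eta,\gamma},p} \le 2\|\dot p_s\|^*_{\Phi_{\eta,\gamma},p_s} \le 2\|L'-L\|_{\Phi_{\eta,\gamma},p_s} \le 4\|L'-L\|_{\Phi_{\eta,\gamma},p} \le \frac14$, using the hypothesis $\|L'-L\|_{\Phi_{\eta,\gamma},p}\le\frac{1}{16}$. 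Integrating, $\|p_\theta - p\|^*_{\Phi_{\eta,\gamma},p} \le \int_0^\theta \|\dot p_s\|^*_{\Phi_{\eta,\gamma},p}\,ds \le \frac{\theta}{4} \le \frac14$ for $\theta\le\theta^*$. Since this stays strictly below the Dikin radius $\frac12$, the path cannot reach the boundary of $\mathcal{D}_{\Phi_{\eta,\gamma}}(p)$, so a continuity argument forces $\theta^* = 1$; evaluating at $\theta = 1$ gives $\|p'-p\|^*_{\Phi_{\eta,\gamma},p}\le\frac14\le\frac12$, i.e. $p'\in\mathcal{D}_{\Phi_{\eta,\gamma}}(p)$.

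I expect the main obstacle to be precisely this circularity: the pointwise inequality lives in the local metric at $p_s$, and converting it to the fixed metric at $p$ via Fact \ref{fact:close-norms} is only legitimate while $p_s$ already lies in the Dikin ellipsoid — which is the conclusion. The bootstrap resolves this because the resulting estimate ($\frac14$) is strictly below the radius ($\frac12$), leaving a margin that continuity can exploit. A secondary point to handle is that Fact \ref{fact:close-norms} is stated only for the unstarred norm, so I must separately record the analogous factor-$2$ comparison for the starred norm; as noted, both reduce to the single spectral equivalence of $\nabla^2\Phi_{\eta,\gamma}(p_s)$ and $\nabla^2\Phi_{\eta,\gamma}(p)$.
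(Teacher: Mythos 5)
Your proposal is correct, but note that the paper does not actually prove Fact~\ref{fact:close-probs}: it is imported, together with Facts~\ref{fact:close-norms} and~\ref{fact:diff-probs}, from Lemma~9 of \citep{van2023unified}, so the comparison is with that external proof rather than with anything in this paper. The standard argument there considers the FTRL objective for $L'$ minimized over the constraint set $\mathcal{D}_{\Phi_{\eta,\gamma}}\left(p\left(L\right)\right)$, shows via a second-order optimality estimate that this constrained minimizer lies strictly inside the ellipsoid, and concludes by convexity that it coincides with the unconstrained minimizer $p\left(L'\right)$. Your route is genuinely different: you differentiate the KKT system along the segment $L_\theta = L + \theta\left(L'-L\right)$, obtain the pointwise bound $\|\dot p_\theta\|^*_{\Phi_{\eta,\gamma},p_\theta} \leq \|L'-L\|_{\Phi_{\eta,\gamma},p_\theta}$ by Cauchy--Schwarz (the multiplier term vanishing against $\mathbf{1}^T\dot p_\theta = 0$), and close a continuity bootstrap. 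All the steps check out: the log-barrier term forces a unique, strictly interior minimizer, so the KKT matrix is nonsingular and the implicit function theorem gives smoothness of $\theta \mapsto p_\theta$; the transfer of both norms at a factor of $2$ is legitimate because, the Hessian being diagonal, the statement of Fact~\ref{fact:close-norms} (holding for all $x$) is equivalent to the entrywise sandwich $\frac{1}{4}\nabla^2\Phi_{\eta,\gamma}\left(p\right) \preceq \nabla^2\Phi_{\eta,\gamma}\left(q\right) \preceq 4\nabla^2\Phi_{\eta,\gamma}\left(p\right)$, which yields the starred analogue with the same constant; and the bootstrap closes because your estimate $\frac{1}{4}$ sits strictly below the Dikin radius $\frac{1}{2}$. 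Your argument even yields slightly more than the cited lemma, namely the Lipschitz-type bound $\|p\left(L'\right)-p\left(L\right)\|^*_{\Phi_{\eta,\gamma},p\left(L\right)} \leq 4\|L'-L\|_{\Phi_{\eta,\gamma},p\left(L\right)}$, which recovers the flavor of Fact~\ref{fact:diff-probs} as a byproduct; what the citation buys the paper is only brevity, while your proof makes the appendix self-contained.
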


\subsection{Drift bounds}
\begin{lemma}\label{lem:close-probs}
 Computing $p$ as in Eq. (\ref{eq:ftrl-prob-def}) and using regularization $\Phi_{\eta,\gamma}$ as in Eq. (\ref{eq:def-reg}) for some $\eta,\gamma>0$ such that $\frac{1}{\sqrt{\gamma}} \geq 32d_\mathrm{max}$, we have for all $0 \leq d \leq d_\mathrm{max}$:
 \[
     p\left(\widehat{L}^\mathrm{e}_{t+d}\right) \in \mathcal{D}_{\Phi_{\eta,\gamma}}\left(p\left(\widehat{L}^\mathrm{e}_t\right)\right).
 \]
\end{lemma}
\begin{proof}
We will prove by induction on $d$. The base case $d=0$ is trivially true. We will thus assume the claim is true for any $d' < d$. Using Fact \ref{fact:close-probs}, we only need to show that:
\[
\sum_{\tau=t}^{t+d-1}{\left\|\hat{\ell}^{(t+d-1)}_\tau - \hat{\ell}^{(t-1)}_\tau\right\|_{\Phi_{\eta, \gamma},p\left(\widehat{L}^\mathrm{e}_t\right)}} \leq \frac{1}{16}.
\]
 From our assumption, we can use Fact \ref{fact:close-norms} and get:
\begin{align*}
\sum_{\tau=t}^{t+d-1}{\left\|\hat{\ell}^{(t+d-1)}_\tau - \hat{\ell}^{(t-1)}_\tau\right\|_{\Phi_{\eta, \gamma},p\left(\widehat{L}^\mathrm{e}_t\right)}} \leq 2 \sum_{\tau=t}^{t+d-1}{\left\|\hat{\ell}^{(t+d-1)}_\tau - \hat{\ell}^{(t-1)}_\tau\right\|_{\Phi_{\eta, \gamma},p\left(\widehat{L}^\mathrm{e}_\tau\right)}}.
\end{align*}
Hence, from Eq. (\ref{eq:def-norm}):
\begin{align*}
\left\|\hat{\ell}^{(t+d-1)}_\tau - \hat{\ell}^{(t-1)}_\tau\right\|_{\Phi_{\eta, \gamma},p\left(\widehat{L}^\mathrm{e}_\tau\right)}^2 &= \sum_{i\in[K]} \frac{\eta\gamma p_i^2\left(\widehat{L}^\mathrm{e}_\tau\right)}{\eta + \gamma p_i\left(\widehat{L}^\mathrm{e}_\tau\right)} \frac{\left(\ell^{(t+d-1)}_{\tau,i} - \ell^{(t-1)}_{\tau,i}\right)^2 \mathds{1}\left[a=a_\tau\right]}{{p_i^2\left(\widehat{L}^\mathrm{e}_\tau\right)}} \\
&\leq \frac{\eta \gamma}{\eta + \gamma p_{a_\tau}\left(\widehat{L}^\mathrm{e}_\tau\right)} \\
&\leq \gamma,
\end{align*}
and thus we get that 
\[
\sum_{\tau=t}^{t+d-1}{\left\|\hat{\ell}^{(t+d-1)}_\tau - \hat{\ell}^{(t-1)}_\tau\right\|_{\Phi_{\eta, \gamma},p\left(\widehat{L}^\mathrm{e}_t\right)}} \leq 2d\sqrt{\gamma} \leq 2 d_\mathrm{max}\sqrt{\gamma} \leq \frac{1}{32}
\]
as required.
\end{proof}

\lemftrldrift*
\begin{proof}
\item\paragraph{$H_1,H_3.$}
We will prove the bound for $H_3$, and the proof for $H_1$ is identical. First, note that:
\[ \widetilde{L}_t - \widehat{L}^\mathrm{e}_t = \sum_{\tau=1}^{t-1}{\left(\left(1-\lambda_\tau^{(t-1)}\right)\left(\hat{\ell}_\tau - \hat{\ell}^{(t-1)}_\tau\right) + \lambda_\tau^{(t-1)}\left(\ell_\tau - \hat{\ell}^{(t-1)}_\tau\right)\right)}. \]
Denote $t'=\max\left\{1,t-d_\mathrm{max}\right\}$. For any $\tau < t'$ we have that $\hat{\ell}_\tau = \hat{\ell}^{(t-1)}_\tau$ and $\lambda_\tau^{(t-1)} = 0$, and thus:
\begin{align*}
\left\|\widetilde{L}_t - \widehat{L}^\mathrm{e}_t\right\|_{\Phi_{\eta, \gamma},p\left(\widehat{L}^\mathrm{e}_t\right)} &\leq \sum_{\tau=t'}^{t-1}{\left\|\hat{\ell}_\tau - \hat{\ell}^{(t-1)}_\tau\right\|_{\Phi_{\eta, \gamma},p\left(\widehat{L}^\mathrm{e}_t\right)}} + \sum_{\tau=t'}^{t-1}{\left\|\ell_\tau - \hat{\ell}^{(t-1)}_\tau\right\|_{\Phi_{\eta, \gamma},p\left(\widehat{L}^\mathrm{e}_t\right)}} \\
&\leq \sum_{\tau=t'}^{t-1}{\left\|\hat{\ell}_\tau\right\|_{\Phi_{\eta, \gamma},p\left(\widehat{L}^\mathrm{e}_t\right)}} + \sum_{\tau=t'}^{t-1}{\left\|\ell_\tau\right\|_{\Phi_{\eta, \gamma},p\left(\widehat{L}^\mathrm{e}_t\right)}} + 2\sum_{\tau=t'}^{t-1}{\left\|\hat{\ell}^{(t-1)}_\tau\right\|_{\Phi_{\eta, \gamma},p\left(\widehat{L}^\mathrm{e}_t\right)}}.
\end{align*}
Using Fact \ref{fact:close-norms} and Lemma \ref{lem:close-probs} we can move to the norm induced by $p\left(\widehat{L}^\mathrm{e}_\tau\right)$:
\begin{align*}
&\left\|\widetilde{L}_t - \widehat{L}^\mathrm{e}_t\right\|_{\Phi_{\eta, \gamma},p\left(\widehat{L}^\mathrm{e}_t\right)} \\
&\leq 2\sum_{\tau=t'}^{t-1}{\left\|\hat{\ell}_\tau\right\|_{\Phi_{\eta, \gamma},p\left(\widehat{L}^\mathrm{e}_\tau\right)}} + \sum_{\tau=t'}^{t-1}{\left\|\ell_\tau\right\|_{\Phi_{\eta, \gamma},p\left(\widehat{L}^\mathrm{e}_t\right)}} + 4\sum_{\tau=t'}^{t-1}{\left\|\hat{\ell}^{(t-1)}_\tau\right\|_{\Phi_{\eta, \gamma},p\left(\widehat{L}^\mathrm{e}_\tau\right)}}.
\end{align*}
We can now use Eq. (\ref{eq:def-norm}) to see that
\begin{align*}
\left\|\hat{\ell}_\tau\right\|_{\Phi_{\eta, \gamma},p\left(\widehat{L}^\mathrm{e}_\tau\right)}^2 &= \sum_{i\in[K]} \frac{\eta\gamma p_i^2\left(\widehat{L}^\mathrm{e}_\tau\right)}{\eta + \gamma p_i\left(\widehat{L}^\mathrm{e}_\tau\right)} \frac{\ell_\tau^2 \mathds{1}\left[a=a_\tau\right]}{{p_i^2\left(\widehat{L}^\mathrm{e}_\tau\right)}} \\
&\leq \frac{\eta \gamma}{\eta + \gamma p_{a_\tau}\left(\widehat{L}^\mathrm{e}_\tau\right)} \\
&\leq \gamma,
\end{align*}
and the same is true for $\left\|\hat{\ell}^{(t-1)}_\tau\right\|^2_{\Phi_{\eta, \gamma},p\left(\widehat{L}^\mathrm{e}_\tau\right)}$. Also:
\[
\left\|\ell_\tau\right\|^2_{\Phi_{\eta, \gamma},p\left(\widehat{L}^\mathrm{e}_t\right)} = \sum_{i\in [K]} \frac{\eta\gamma p_i^2\left(\widehat{L}^\mathrm{e}_t\right)}{\eta + \gamma p_i\left(\widehat{L}^\mathrm{e}_t\right)} \ell_{\tau,i}^2 \leq \sum_{i\in [K]} \gamma p_i^2\left(\widehat{L}^\mathrm{e}_t\right) \leq \gamma
\]
as well. Hence:
\[
\left\|\widetilde{L}_t - \widehat{L}^\mathrm{e}_t\right\|_{\Phi_{\eta, \gamma},p\left(\widehat{L}^\mathrm{e}_t\right)} \leq 7d_\mathrm{max}\sqrt{\gamma} \leq \frac{1}{16}.
\]
Similarly:
\begin{align*}
\left\|\widehat{L}_t - \widehat{L}^\mathrm{e}_t\right\|_{\Phi_{\eta, \gamma},p\left(\widehat{L}^\mathrm{e}_t\right)} &\leq \sum_{\tau=1}^{t-1}\left\| \hat{\ell}_\tau - \hat{\ell}^{(t-1)}_\tau \right\|_{\Phi_{\eta, \gamma},p\left(\widehat{L}^\mathrm{e}_t\right)} \\
&= \sum_{\tau=t'}^{t-1}\left\| \hat{\ell}_\tau - \hat{\ell}^{(t-1)}_\tau \right\|_{\Phi_{\eta, \gamma},p\left(\widehat{L}^\mathrm{e}_t\right)} \\
&\leq 2\sum_{\tau=t'}^{t-1}\left\| \hat{\ell}_\tau - \hat{\ell}^{(t-1)}_\tau \right\|_{\Phi_{\eta, \gamma},p\left(\widehat{L}^\mathrm{e}_\tau\right)} \\
&\leq 2d_\mathrm{max}\sqrt{\gamma} \\
&\leq \frac{1}{16}.
\end{align*}

Thus, we can use Hölder's inequality to get:
\begin{align*}
&\left(p\left(\widetilde{L}_{t}\right) - p\left(\widehat{L}_{t}\right)\right)\cdot\ell_t \\
&\leq \left\|p\left(\widetilde{L}_{t}\right) - p\left(\widehat{L}_{t}\right)\right\|^*_{\Phi_{\eta, \gamma},p\left(\widehat{L}^\mathrm{e}_t\right)} \left\|\ell_t\right\|_{\Phi_{\eta, \gamma},p\left(\widehat{L}^\mathrm{e}_t\right)} \\
&\leq 8\left\|\widetilde{L}_{t} - \widehat{L}_{t}\right\|_{\Phi_{\eta, \gamma},p\left(\widehat{L}^\mathrm{e}_t\right)} \left\|\ell_t\right\|_{\Phi_{\eta, \gamma},p\left(\widehat{L}^\mathrm{e}_t\right)} \\
&\leq 8\left\|\sum_{\tau=1}^{t-1}\lambda_\tau^{(t-1)}\left(\ell_\tau - \hat{\ell}_\tau\right)\right\|_{\Phi_{\eta, \gamma},p\left(\widehat{L}^\mathrm{e}_t\right)} \left\|\ell_t\right\|_{\Phi_{\eta, \gamma},p\left(\widehat{L}^\mathrm{e}_t\right)} \\
&= 8\left\|\sum_{\tau=t'}^{t-1}\lambda_\tau^{(t-1)}\left(\ell_\tau - \hat{\ell}_\tau\right)\right\|_{\Phi_{\eta, \gamma},p\left(\widehat{L}^\mathrm{e}_t\right)} \left\|\ell_t\right\|_{\Phi_{\eta, \gamma},p\left(\widehat{L}^\mathrm{e}_t\right)},
\end{align*}
where the second inequality is due to Fact \ref{fact:diff-probs} and last inequality is since $\lambda_\tau^{(t-1)}=0$ for any $\tau<t'$.

Since our estimators are unbiased, $\mathbb{E}\left[\left(\ell_\tau - \hat{\ell}_\tau\right)\left(\ell_{\tau'} - \hat{\ell}_{\tau'}\right)\right] = 0$ for any $\tau\neq\tau'$, and thus
\begin{align*}
\mathbb{E}\left[\left\|\sum_{\tau=t'}^{t-1}\lambda_\tau^{(t-1)}\left(\ell_\tau - \hat{\ell}_\tau\right)\right\|^2_{\Phi_{\eta, \gamma},p\left(\widehat{L}^\mathrm{e}_t\right)}\right] &= \sum_{\tau=t'}^{t-1}\left(\lambda_\tau^{(t-1)}\right)^2\mathbb{E}\left[\left\|\ell_\tau - \hat{\ell}_\tau\right\|^2_{\Phi_{\eta, \gamma},p\left(\widehat{L}^\mathrm{e}_t\right)}\right] \\
&= \sum_{\tau=t'}^{t-1}\left(\lambda_\tau^{(t-1)}\right)^2\mathbb{E}\left[\left\|\hat{\ell}_\tau\right\|^2_{\Phi_{\eta, \gamma},p\left(\widehat{L}^\mathrm{e}_t\right)} - \left\|\ell_\tau\right\|^2_{\Phi_{\eta, \gamma},p\left(\widehat{L}^\mathrm{e}_t\right)}\right] \\
&\leq \sum_{\tau=t'}^{t-1}\left(\lambda_\tau^{(t-1)}\right)^2\mathbb{E}\left[\left\| \hat{\ell}_\tau\right\|^2_{\Phi_{\eta, \gamma},p\left(\widehat{L}^\mathrm{e}_t\right)}\right] \\
&\leq 2\sum_{\tau=t'}^{t-1}\left(\lambda_\tau^{(t-1)}\right)^2\mathbb{E}\left[\left\| \hat{\ell}_\tau\right\|^2_{\Phi_{\eta, \gamma},p\left(\widehat{L}^\mathrm{e}_\tau\right)}\right],
\end{align*}
where in the last step we used Fact \ref{fact:close-probs} and Lemma \ref{lem:close-probs} to move to the norms induced by $p\left(\widehat{L}^\mathrm{e}_\tau\right)$.

Using Eq. (\ref{eq:def-norm}):
\begin{align}\label{eq:exp-hat-bound}
\mathbb{E}\left[\left\|\hat{\ell}_\tau\right\|^2_{\Phi_{\eta, \gamma},p\left(\widehat{L}^\mathrm{e}_\tau\right)}\right] &= \mathbb{E}\left[\sum_{i\in[K]}{ \frac{\eta\gamma p_i^2\left(\widehat{L}^\mathrm{e}_\tau\right)}{\eta + \gamma p_i\left(\widehat{L}^\mathrm{e}_\tau\right)} \frac{\ell_{\tau,i}^2 \mathds{1}\left[a=a_\tau\right]}{p_i^2\left(\widehat{L}^\mathrm{e}_\tau\right)}}\right] \notag \\
&= \mathbb{E}\left[\sum_{i\in[K]}{\frac{\eta\gamma p_i\left(\widehat{L}^\mathrm{e}_\tau\right)}{\eta + \gamma p_i\left(\widehat{L}^\mathrm{e}_\tau\right)} \ell_{\tau,i}^2}\right] \notag \\
&\leq \eta K.
\end{align}
Combining the last equations and using Jensen's inequality, we thus have:
\[
\mathbb{E}\left[\left\|\sum_{\tau=t'}^{t-1}\lambda_\tau^{(t-1)}\left(\ell_\tau - \hat{\ell}_\tau\right)\right\|_{\Phi_{\eta, \gamma},p\left(\widehat{L}^\mathrm{e}_t\right)}\right] \leq \sqrt{2\eta K \sum_{\tau=1}^{t-1}{\left(\lambda_\tau^{(t-1)}\right)^2}} \leq \sqrt{2\eta K \lambda_t} \leq \sqrt{\eta}(K+\lambda_t).
\]
Using Eq. (\ref{eq:def-norm}) again, we have for any $\tau,t$:
\[
\left\|\ell_\tau\right\|^2_{\Phi_{\eta, \gamma},p\left(\widehat{L}^\mathrm{e}_t\right)} = \sum_{i\in [K]} \frac{\eta\gamma p_i^2\left(\widehat{L}^\mathrm{e}_t\right)}{\eta + \gamma p_i\left(\widehat{L}^\mathrm{e}_t\right)} \ell_{\tau,i}^2 \leq \sum_{i\in [K]}\eta p_i\left(\widehat{L}^\mathrm{e}_t\right) = \eta,
\]
so in total, we get 
\[
H_3 = \sum_{t=1}^T{\mathbb{E}\left[\left(p\left(\widetilde{L}_{t}\right) - p\left(\widehat{L}_{t}\right)\right)\cdot\ell_t\right]} \leq 8\eta\left(KT+\sum_{t=1}^T{\lambda_t}\right)
\]
as desired.

\item\paragraph{$H_2.$}
For $H_2$, observe that the same as before, we have:
\begin{align*}
\left\|\widetilde{L}_t - \widehat{L}^\mathrm{e}_t\right\|_{\Phi_{\eta, \gamma},p\left(\widehat{L}^\mathrm{e}_t\right)} \leq \frac{1}{16} \qquad \mbox{and} \qquad \left\|\widetilde{L}^\mathrm{e}_t - \widehat{L}^\mathrm{e}_t\right\|_{\Phi_{\eta, \gamma},p\left(\widehat{L}^\mathrm{e}_t\right)} \leq \frac{1}{16}.
\end{align*}
And thus we can use Hölder's inequality as before:
\begin{align*}
&\left(p\left(\widetilde{L}^\mathrm{e}_{t}\right) - p\left(\widetilde{L}_{t}\right)\right)\cdot\ell_t \\
&\leq 8\left\|\widetilde{L}^\mathrm{e}_{t} - \widetilde{L}_{t}\right\|_{\Phi_{\eta, \gamma},p\left(\widehat{L}^\mathrm{e}_t\right)} \left\|\ell_t\right\|_{\Phi_{\eta, \gamma},p\left(\widehat{L}^\mathrm{e}_t\right)} \\
&\leq 8\sqrt{\eta}\sum_{\tau=1}^{t-1}\left(1-\lambda_\tau^{(t-1)}\right)\left\|\hat{\ell}^{(t-1)}_\tau - \hat{\ell}_\tau\right\|_{\Phi_{\eta, \gamma},p\left(\widehat{L}^\mathrm{e}_t\right)} + 8\sqrt{\eta}\sum_{\tau=1}^{t-1}\lambda_\tau^{(t-1)}\left\|\ell^{(t-1)}_\tau - \ell_\tau\right\|_{\Phi_{\eta, \gamma},p\left(\widehat{L}^\mathrm{e}_t\right)} \\
&\leq 8\sqrt{\eta}\sum_{\tau=1}^{t-1}\left(1-\lambda_\tau^{(t-1)}\right)\left\|\hat{\ell}^{(t-1)}_\tau - \hat{\ell}_\tau\right\|_{\Phi_{\eta, \gamma},p\left(\widehat{L}^\mathrm{e}_t\right)} + 8\eta\lambda_t \\
&= 8\sqrt{\eta}\sum_{\tau=t'}^{t-1}\left(1-\lambda_\tau^{(t-1)}\right)\left\|\hat{\ell}^{(t-1)}_\tau - \hat{\ell}_\tau\right\|_{\Phi_{\eta, \gamma},p\left(\widehat{L}^\mathrm{e}_t\right)} + 8\eta\lambda_t \\
&\leq 16\sqrt{\eta}\sum_{\tau=t'}^{t-1}\left(1-\lambda_\tau^{(t-1)}\right)\left\|\hat{\ell}^{(t-1)}_\tau - \hat{\ell}_\tau\right\|_{\Phi_{\eta, \gamma},p\left(\widehat{L}^\mathrm{e}_\tau\right)} + 8\eta\lambda_t
\end{align*}
where again we denote $t'=\max\left\{1,t-d_\mathrm{max}\right\}$. The second inequality is due to Fact \ref{fact:diff-probs} and $\left\|\ell_t\right\|^2_{\Phi_{\eta, \gamma},p\left(\widehat{L}^\mathrm{e}_t\right)}\leq \eta$, and the last inequality is due to Fact \ref{fact:close-probs} and Lemma \ref{lem:close-probs} to move to the norms induced by $p\left(\widehat{L}^\mathrm{e}_\tau\right)$.

Using Eq. (\ref{eq:def-norm}):
\begin{align*}
\mathbb{E}\left[\left\|\hat{\ell}^{(t-1)}_\tau - \hat{\ell}_\tau\right\|^2_{\Phi_{\eta, \gamma},p\left(\widehat{L}^\mathrm{e}_\tau\right)}\right] &= \mathbb{E}\left[\sum_{i\in[K]}{ \frac{\eta\gamma p_i^2\left(\widehat{L}^\mathrm{e}_\tau\right)}{\eta + \gamma p_i\left(\widehat{L}^\mathrm{e}_\tau\right)} \frac{\left(\ell^{(t-1)}_{\tau,i} - \ell_{\tau,i}\right)^2 \mathds{1}\left[a=a_\tau\right]}{p_i^2\left(\widehat{L}^\mathrm{e}_\tau\right)}}\right] \\
&= \mathbb{E}\left[\sum_{i\in[K]}{\frac{\eta\gamma p_i\left(\widehat{L}^\mathrm{e}_\tau\right)}{\eta + \gamma p_i\left(\widehat{L}^\mathrm{e}_\tau\right)} \left(\ell^{(t-1)}_{\tau,i} - \ell_{\tau,i}\right)^2}\right] \\
&\leq \eta \sum_{i\in[K]}\left(\ell^{(t-1)}_{\tau,i} - \ell_{\tau,i}\right)^2 \\
&= \eta \left\| \ell^{(t-1)}_{\tau} - \ell_{\tau} \right\|^2_2,
\end{align*}
and so by Jensen's inequality:
\[
\mathbb{E}\left[\left\|\hat{\ell}^{(t-1)}_\tau - \hat{\ell}_\tau\right\|_{\Phi_{\eta, \gamma},p\left(\widehat{L}^\mathrm{e}_\tau\right)}\right] \leq 
\sqrt{\eta} \left\| \ell^{(t-1)}_{\tau} - \ell_{\tau} \right\|_2
= \sqrt{\eta} \frac{\lambda^{(t-1)}_{\tau}}{1-\lambda^{(t-1)}_{\tau}}.
\]
Overall we get
\[
H_2 = \sum_{t=1}^{T}\mathbb{E}\left[\left(p\left(\widetilde{L}^\mathrm{e}_{t}\right) - p\left(\widetilde{L}_{t}\right)\right)\cdot\ell_t\right]
\leq 24 \eta \sum_{t=1}^{T} \lambda_t.
\]

\item\paragraph{$H_4.$}
Note that:
\begin{align*}
\left\|\widehat{L}^*_t - \widehat{L}^\mathrm{e}_t\right\|_{\Phi_{\eta, \gamma},p\left(\widehat{L}^\mathrm{e}_t\right)} \leq \left\| \hat{\ell}_t \right\|_{\Phi_{\eta, \gamma},p\left(\widehat{L}^\mathrm{e}_t\right)} + \left\|\widehat{L}_t - \widehat{L}^\mathrm{e}_t\right\|_{\Phi_{\eta, \gamma},p\left(\widehat{L}^\mathrm{e}_t\right)} 
\leq \left(1+2d_\mathrm{max}\right)\sqrt{\gamma} 
\leq \frac{1}{16},
\end{align*}
Ss we can again use Fact \ref{fact:diff-probs}:
\begin{align*}
\mathbb{E}\left[\left(p\left(\widehat{L}_{t}\right) - p\left(\widehat{L}^*_{t}\right)\right)\cdot\hat{\ell}_t\right] &\leq 8\mathbb{E}\left[\left\|\widehat{L}_{t} - \widehat{L}^*_{t}\right\|_{\Phi_{\eta, \gamma},p\left(\widehat{L}^\mathrm{e}_t\right)} \left\|\hat{\ell}_t\right\|_{\Phi_{\eta, \gamma},p\left(\widehat{L}^\mathrm{e}_t\right)}\right] \\
&= 8\mathbb{E}\left[\left\|\hat{\ell}_t\right\|^2_{\Phi_{\eta, \gamma},p\left(\widehat{L}^\mathrm{e}_t\right)}\right] \\
&\leq 8 \eta K
\end{align*}
where the last step is due to Eq. (\ref{eq:exp-hat-bound}). We can now complete the proof with:
\[
H_4 = \sum_{t=1}^T \mathbb{E}\left[\left(p\left(\widehat{L}_{t}\right) - p\left(\widehat{L}^*_{t}\right)\right)\cdot\hat{\ell}_t\right] \leq 8\eta KT.
\]
\end{proof}

\subsection{Skipping bound}
\lemskipping*
\begin{proof}
We have:
\begin{align*}
R(T) &= \max_{a\in[K]}\mathbb{E}\left[{\sum_{t=1}^T{\ell_{t,a_t} - \ell_{t,a}}}\right] \\
&= \max_{a\in[K]}\mathbb{E}\left[\sum_{t=1}^T{\left(\ell^{(t+d_\mathrm{max})}_{t,a_t} - \ell^{(t+d_\mathrm{max})}_{t,a}\right) + \left(\ell_{t,a_t} - \ell^{(t+d_\mathrm{max})}_{t,a_t}\right) + \left(\ell^{(t+d_\mathrm{max})}_{t,a} - \ell_{t,a}\right)}\right] \\
&\leq \max_{a\in[K]}\mathbb{E}\left[\sum_{t=1}^T{\ell^{(t+d_\mathrm{max})}_{t,a_t} - \ell^{(t+d_\mathrm{max})}_{t,a}}\right] + \sum_{t=1}^T{\left\|\ell_t - \ell^{(t+d_\mathrm{max})}_t\right\|_\infty} \\
&= R^\mathcal{A}_{d_\mathrm{max}}\left(\left\{\ell^{(t+d_\mathrm{max})}_t\right\}_{1\leq t \leq T}\right) + 2\sum_{t=1}^T{\left\|\ell_t - \ell^{(t+d_\mathrm{max})}_t\right\|_\infty},
\end{align*}
where the last step is since algorithm $\mathcal{A}$ cannot distinguish between being wrapped in Algorithm \ref{alg:skipping} and $\left\{\ell^{(t+d_\mathrm{max})}_t\right\}_{1\leq t \leq T}$ being the true losses with maximal delay $d_\mathrm{max}$.
\end{proof}

\end{document}